\newmdtheoremenv{theomd}{Theorem}
\newtheorem{theorem}{Theorem}[section]
\newtheorem*{theorem*}{Theorem}
\newtheorem*{claim*}{Claim}
\newtheorem*{proposition*}{Proposition}
\newtheorem{lemma}[theorem]{Lemma}
\newtheorem*{lemma*}{Lemma}
\newtheorem*{conjecture*}{Conjecture}
\newtheorem*{hypothesis*}{Hypothesis}
\theoremstyle{definition}
\newmdtheoremenv{defmd}[theorem]{Definition}
\newcommand{\savehyperref}[2]{\texorpdfstring{\hyperref[#1]{#2}}{#2}}
\renewenvironment{abstract}
  {{\centering\large\bfseries Abstract\par}\vspace{0.7ex}%
    \bgroup
       \leftskip 20pt\rightskip 20pt\small\noindent\ignorespaces}%
  {\par\egroup\vskip 0.25ex}
\newlength\aftertitskip     \newlength\beforetitskip
\newlength\interauthorskip  \newlength\aftermaketitskip
\newcommand{\paren}[1]{\left(#1 \right )}
\newcommand{\brac}[1]{[#1 ]}
\newcommand{\Brac}[1]{\left[#1\right]}
\newcommand{\Set}[1]{\left\{#1\right\}}
\newcommand{\abs}[1]{\left\lvert#1\right\rvert}
\newcommand{\ceil}[1]{\lceil #1 \rceil}
\newcommand{\norm}[1]{\left\lVert#1\right\rVert}
\newcommand{\smallnorm}[1]{\lVert#1\rVert}
\newcommand{\nnz}[1]{\mathrm{nnz}(#1)}
\newcommand{\defeq}{\mathrel{\mathop:}=}
\newcommand{\Z}{{\mathbb Z}}
\newcommand{\R}{\mathbb R}
\newcommand{\Esymb}{\mathbb{E}}
\newcommand{\Psymb}{\mathbb{P}}
\DeclareMathOperator*{\E}{\Esymb}
\DeclareMathOperator*{\ProbOp}{\Psymb}
\newcommand{\prob}[1]{\ProbOp\Set{#1}}
\newcommand{\Ex}[1]{\E\Brac{#1}}
\definecolor{DSgray}{cmyk}{0,0,0,0.7}
\newcommand{\cA}{\mathcal A}
\newcommand{\cM}{\mathcal M}
\newcommand{\cS}{\mathcal S}
\newcommand{\cT}{\mathcal T}
\newcommand{\bigO}{O}
\DeclareMathOperator*{\argmax}{argmax} 
\newcommand{\poly}{{\sf poly}}
\renewcommand{\epsilon}{\varepsilon}
\newcommand{\sampleTrans}{\code{Sample}}
\newcommand{\sampleProd}{\code{ApxUtility}}
\newcommand{\truncatedRandomizedVI}{\code{TruncatedVRVI}}
\newcommand{\SublineartruncatedRandomizedVI}{\code{SampleTruncatedVRVI}}
\newcommand{\ProblemDependentRandomizedVI}{\code{ProblemDependentTruncatedVRVI}}
\newcommand{\HighPrecision}{\code{OfflineTruncatedVRVI}}
\newcommand{\bv}{\bm{v}}
\newcommand{\bxi}{\bm{\xi}}
\newcommand{\bx}{\bm{x}}
\newcommand{\bg}{\bm{g}}
\newcommand{\bw}{\bm{w}}
\newcommand{\br}{\bm{r}}
\newcommand{\bP}{\bm{P}}
\newcommand{\bu}{\bm{u}}
\newcommand{\bsigma}{\bm{\sigma}}
\newcommand{\bmp}{\bm{p}}
\newcommand{\Atot}{\cA_{\mathrm{tot}}}
\newcommand{\Aset}{\cA}
\newcommand{\vones}{\mathbf{1}}
\newcommand{\vzero}{\mathbf{0}}
\newcommand{\transProb}{\bm{p}}
\newcommand{\otilde}{\tilde{O}}
\newcommand{\transMat}{\bm{P}}
\newcommand{\vals}{\bv}
\newcommand{\optVals}{\bv^*}
\newcommand{\diffVec}{\bm{\Delta}}
\newcommand{\code}[1]{\textnormal{\texttt{#1}}}
\newcommand{\VarOf}[1]{\textnormal{Var}\Brac{#1}}
\newcommand{\Igamma}{(\bm{I} - \gamma \bP^{\star})^{-1}}
\newcommand{\normsigma}{\norm{(\bm{I} - \gamma \bP^{\star})^{-1}\sqrt{\bsigma_{\bv^\star}}}_\infty}
  \newcommand{\cAAAI}[1]{AAAI\ Conference\ on\ Artificial (AAAI)}
\title{Truncated Variance Reduced Value Iteration}
\author{Yujia Jin \\ Stanford University \\ \texttt{yujia@stanford.edu} \and Ishani Karmarkar \\ Stanford University \\ \texttt{ishanik@stanford.edu} \and Aaron Sidford \\ Stanford University \\ \texttt{sidford@stanford.edu} \and  Jiayi Wang \\ Stanford University \\ \texttt{jyw@stanford.edu}}
\date{}
\begin{document}
\maketitle

\begin{abstract}

We provide faster randomized algorithms for computing $\epsilon$-optimal policies in discounted Markov decision process with $\Atot$-state-action pairs, bounded rewards, and discount factor $\gamma$. We provide an $\otilde(\Atot[(1 - \gamma)^{-3}\epsilon^{-2} + (1 - \gamma)^{-2}])$-time algorithm in the sample setting, where the probability transition matrix is unknown but accessible through a generative model which can be queried in $\otilde(1)$-time, and an $\otilde(s + \Atot (1-\gamma)^{-2})$-time algorithm in the offline setting where the probability transition matrix is known and $s$-sparse. These results improve upon the prior state-of-the-art which either ran in $\otilde(\Atot[(1 - \gamma)^{-3}\epsilon^{-2} + (1 - \gamma)^{-3}])$ time (\cite{sidford2018journal, SWWY18}) in the sample setting, $\otilde(s + \Atot (1-\gamma)^{-3})$ time (\cite{sidfordNearOpt}) in the offline setting, or time at least quadratic in the number of states using interior point methods for linear programming.
Our algorithms build upon prior stochastic variance-reduced value iteration methods \cite{sidford2018journal, SWWY18} and carefully truncate the progress of iterates to improve the variance of new variance-reduced sampling procedures that we introduce to implement the steps. Our methods are essentially model-free and can be implemented in $\otilde(\Atot)$-space when given generative model access. Consequently, our results take a step in closing the sample-complexity gap between model-free and model-based methods.

\end{abstract}

\tableofcontents
\thispagestyle{empty}
\newpage

\section{Introduction}

Markov decision processes (MDPs) are a fundamental mathematical model for decision making under uncertainty. They play a central role in reinforcement learning and prominent problems in computational learning theory (see e.g., \cite{hu2007markov, wei2017reinforcement, degris2006learning, sigaud2013markov}). MDPs have been studied extensively for decades (\cite{van2012reinforcement, van2009markov}), and there have been numerous algorithmic advances in efficiently optimizing them (\cite{sidfordNearOpt, sidford2018journal, SWWY18, ye05, littman2013complexity, lee2014path, ye2011simplex, scherrer2013improved}). 

In this paper, we consider the standard problem of \emph{optimizing a discounted Markov Decision Process (DMDP)} $\cM = (\cS, \cA, \bP, \br, \gamma)$. We consider the \emph{tabular setting} where there is a known finite set of \emph{states} $\cS$ and at each state $s \in \cS$ there is a finite, non-empty, set of \emph{actions}, $\cA_s$ for an agent to choose from; $\cA = \{(s, a): s \in \cS, a \in \cA_s\}$ denotes the full set of state action pairs and $\Atot \defeq \abs{\Aset} \geq \abs{\cS}.$ The agent proceeds in rounds $t = 0, 1, 2, \ldots.$ In each round $t$, the agent is in state $s_t \in \cS$; chooses action $a_t \in \cA_{s_t}$, which yields a known reward $\bm{r}_t = \bm{r}_{s_t,a} \in [0,1]$; and transitions to random state $s_{t + 1}$ sampled (independently) from a (potentially) unknown distribution $\bm{p}_{a}(s_t) \in \Delta^{\cS}$ for round $t + 1$, where $\transProb_{a}(s_t)^\top$ is the $(s_t, a)$-th row of $\bP \in [0, 1]^{\cA \times \cS}$. The goal is to compute an \emph{ $\epsilon$-optimal policy}, where a (deterministic) policy $\pi$, is a mapping from each state $s \in \cS$ to an action $\pi(s) \in \cA_s$ and is \emph{$\epsilon$-optimal} if for every initial $s_0 \in \cS$ the \emph{expected discounted reward of $\pi$} $\E[\sum_{t \geq 0} r_t \gamma^{t}]$ is at least  $\optVals_{s_0} - \epsilon$. Here, $\optVals_{s_0}$ is the maximum expected discounted reward of any policy applied starting from initial state $s_0$ and $\optVals \in \R^{\cS}$ is called the \emph{optimal value} of the MDP.  

Excitingly, a line of work \citep{kearns1998finite, AMK13, SWWY18, sidfordNearOpt, AKY20, LWCGC20} recently resolved the query complexity for solving DMDPs (up to polylogarithmic factors) in what we call the \emph{sample setting} where the transitions $\bm{p}_{a}(s)$ are accessible only through a \emph{generative model} (\cite{AMK13}). A \emph{generative model} is an oracle which when queried with any $s \in \cS$ and $a \in \cA_s$ returns a random $s' \in \cS$ sampled independently from $\bm{p}_{a}(s)$ \citep{kakade2003sample}. It was shown in \citet{LWCGC20} that for all $\epsilon \in (0, (1-\gamma)^{-1}]$ there is an algorithm which computes an $\epsilon$-optimal policy with probability $1- \delta$ using $\otilde(\Atot(1-\gamma)^{-3}\epsilon^{-2})$ queries where we use $\otilde(\cdot)$ to hide polylogarithmic factors in $\Atot$, $\epsilon^{-1}$, $(1 - \gamma)^{-1}$, and $\delta^{-1}$. This result improved upon a prior result of \cite{AKY20} which achieved the same query complexity for $\epsilon \in [0, (1-\gamma)^{-1/2}]$, of \cite{SWWY18} which achieved this query complexity for $\epsilon \in [0, 1]$, and of \cite{AMK13} which achieved it for $\epsilon \in [0, (\abs{\cS}(1-\gamma))^{-1/2}]$. Additionally, this query complexity is known to be optimal in the worst case (up to polylogarithmic factors) due to lower bounds of \cite{AMK13} (and extensions of \cite{feng2019does}), which established that the optimal query complexity for finding $\epsilon$-optimal policies with probability $1-\delta$ is $\Omega(\Atot(1-\gamma)^{-3}\epsilon^{-2}\log(\Atot\delta^{-1})).$

Interestingly, recent state-of-the-art results \cite{AKY20,LWCGC20} (as well as \cite{AMK13}) are \emph{model-based}: they query the oracle for every state-action pair, use the resulting samples to build an empirical model of the MDP, and then solve this empirical model. State-of-the-art computational complexities for the methods are then achieved by applying high-accuracy, algorithms for optimizing MDPs in what we call the \emph{offline setting}, when the transition probabilities are known \cite{SWWY18, BLLSS0W21}. 

Correspondingly, obtaining optimal query complexities for large $\epsilon$, e.g., $\epsilon \gg 1$, comes with certain costs. Model-based methods use memory $\Omega(\Atot(1-\gamma)^{-3} \epsilon^{-2})$--rather than the $\otilde(\Atot)$ memory used by \emph{model-free} methods such as \cite{SWWY18,sidfordNearOpt, jinefficiently}, which run stochastic, low memory analogs of classic popular algorithms for solving DMDPs (e.g., value policy iteration). Moreover, although state-of-the-art model-based methods use $\Omega(\Atot(1-\gamma)^{-3} \epsilon^{-2})$ samples, the state-of-the-art time to compute the optimal policy is either $\otilde(\Atot(1-\gamma)^{-3} \max\{1, \epsilon^{-2}\})$ (using \cite{SWWY18}) or has a larger larger polynomial dependence on $\Atot$ and $|\cS|$ by using interior point methods (IPMs) for linear programming (see Section~\ref{sec:results}). Consequently, in the worst case, the runtime cost per sample is more than polylogarithmic for $\epsilon$ sufficiently larger than $1$.

These costs are connected to the state-of-the-art runtimes for optimizing DMDPs in the offline setting. Ignoring IPMs (discussed in \Cref{sec:results}), the state-of-the-art runtime for optimizing a DMDP is $\otilde(\nnz{\transMat} + \Atot (1- \gamma)^{-3})$  due to \cite{SWWY18} where $\nnz{\transMat}$ denotes the number of non-zero entries in $\transMat$, i.e., the number of triplets $(s,s',a)$ where taking action $a \in \cA_s$ at state $s \in \cS$ has a non-zero probability of transitioning to $s' \in \cS$. This method is essentially model-free; it simply performs a variant of stochastic value iteration where passes on $\transMat$ are used to reduce the variance of sampling and can be implemented in $\otilde(\cA)$-space given access to a generative model and the ability to multiply $\transMat$ with vectors. The difficulty in further improving the runtimes in the sample setting and improving the performance of model-free methods seems connected to the difficulty in improving the additive $\Atot (1- \gamma)^{-3}$-term in this runtime (see the discussion in Section~\ref{sec:approach}.)

In this paper, we ask whether these complexities can be improved. \emph{Is it possible to lower the memory requirements of near-optimal query algorithms for large $\epsilon$?} \emph{Can we improve the runtime for optimizing MDPs in the offline setting and can we improve the computational cost per sample in computing optimal policies in DMDPs?} More broadly, \emph{is it possible to close the sample-complexity gap between model-free and model-based methods for optimizing DMDPs?}

\subsection{Our results}\label{sec:results}

In this paper, we show how to answer each of these motivating questions in the affirmative. 
We provide faster algorithms for optimizing DMDPs in both the sample and offline setting that are implementable in $\otilde(\Atot)$-space provided suitable access to the input. In addition to computing $\epsilon$-optimal policies, these methods also compute \emph{$\epsilon$-optimal values}: we call any $\vals \in \R^\cS$ a \emph{value vector} and say that it is $\epsilon$-optimal if $\norm{\vals - \optVals}_\infty \leq \epsilon$.

Here we present our main results on algorithms for solving DMDPs in sample setting and in the offline setting and compare to prior work. For simplicity of comparison, we defer any discussion and comparison of DMDP algorithms that use general IPMs for linear program to the end of this section. The state-of-the-art such IPM methods obtain improved running times but use $\Omega(|\cS|^2)$ space and $\Omega(|\cS|^2)$ time and use general-purpose linear system solvers. As such they are perhaps qualitatively different from the more combinatorial or dyanmic-programming based methods, e.g., value iteration and stochastic value iteration, more commonly discussed in this introduction.

In the sample setting, our main result is an algorithm that uses $\otilde(\Atot[(1-\gamma)^{-3} \epsilon^{-2} + (1 - \gamma)^{-2}])$ samples and time and $O(\Atot)$-space. It improves upon the prior, non-IPM, state-of-the-art which uses $\otilde(\Atot[(1-\gamma)^{-3} \epsilon^{-2} + (1 - \gamma)^{-3}])$ time \cite{sidfordNearOpt} and nearly matches the state-of-the-art sample complexity for all $\epsilon = O((1- \gamma)^{-1/2})$. See Table~\ref{table:sampling} for a more complete comparison.

\begin{restatable}{theorem}{mainSample}\label{thm:main_sample}
In the sample setting, there is an algorithm that uses $\otilde(\Atot[(1-\gamma)^{-3} \epsilon^{-2} + (1 - \gamma)^{-2}])$ samples and time and $O(\Atot)$ space, and computes an $\epsilon$-optimal policy and $\epsilon$-optimal values with probability $1 - \delta$.
\end{restatable}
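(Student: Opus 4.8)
The plan is to compute an $\epsilon$-optimal value vector via an epoch-based variance-reduced value iteration and then read off a certified $\epsilon$-optimal policy. The Bellman optimality operator $T$ is a $\gamma$-contraction in $\norm{\cdot}_\infty$, so it suffices to design a randomized subroutine that, given a value estimate $\bv$ with $\norm{\bv - \optVals}_\infty \le \xi$, returns with probability $1 - \delta'$ a new estimate $\bv'$ with $\norm{\bv' - \optVals}_\infty \le \xi/2$, and then to chain $\otilde(1)$ such halving epochs starting from $\xi_0 = \otilde((1-\gamma)^{-1})$ down to $\xi = \epsilon$. The whole difficulty is controlling the sample cost of one halving epoch as a function of the target accuracy $\xi$, so that summing over epochs produces exactly the two claimed terms.

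Within an epoch I would maintain an anchor $\tilde{\bv}$ (the input $\bv$) and, once per epoch, use batch sampling from the generative model to precompute an offset $\Toff \approx \gamma \bP \tilde{\bv}$ to suitable additive accuracy. I would then run $\otilde((1-\gamma)^{-1})$ inner value-iteration steps in which the Bellman update at $(s,a)$ is estimated by $\br_{s,a} + [\Toff]_{s,a} + \tfrac{\gamma}{L}\sum_{\ell}(\bv_k - \tilde{\bv})_{s_\ell'}$ using $L$ fresh samples $s_\ell' \sim \transProb_a(s)$; the variance-reduced correction term has variance controlled by $\norm{\bv_k - \tilde{\bv}}_\infty^2$, which is small precisely because the iterates stay near the anchor. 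The crucial new device is to \emph{truncate} the per-coordinate progress of the inner iterates so that they remain sandwiched in a controlled band around $\optVals$; this forces the effective per-step variance to be governed by the coordinatewise variances $\bsigma_{\optVals}$ rather than by the crude bound $\norm{\bv}_\infty^2 \approx (1-\gamma)^{-2}$, and it is what opens the door to the reduced additive term.

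To turn these per-step variances into a global sample bound I would invoke the total-variance inequality $\normsigma = O((1-\gamma)^{-3/2})$, which bounds the discounted accumulation of $\sqrt{\bsigma_{\optVals}}$ along trajectories. Feeding this through a Bernstein concentration analysis of the averaged inner updates shows that attaining final accuracy $\xi$ in the finest epoch costs $\otilde(\Atot \, \normsigma^2 \, \xi^{-2}) = \otilde(\Atot (1-\gamma)^{-3}\xi^{-2})$ samples, while the once-per-epoch offset computation and the inner-sample overhead contribute only $\otilde(\Atot(1-\gamma)^{-2})$ per epoch. Summing the geometric series in $\xi$ (dominated by $\xi = \epsilon$) over the $\otilde(1)$ epochs yields the claimed $\otilde(\Atot[(1-\gamma)^{-3}\epsilon^{-2} + (1-\gamma)^{-2}])$ total. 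The space bound $O(\Atot)$ follows because the method stores only $O(1)$ value vectors and running sample averages and never materializes an empirical transition model.

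The main obstacle I anticipate is the coupled variance-and-bias analysis of the truncated iterates. I would need a monotonicity and coupling argument showing that truncation keeps each iterate within an $O(\xi)$ band of $\optVals$ with high probability while introducing only negligible bias into the Bellman updates, so that the improved $(1-\gamma)^{-2}$ additive term is genuinely achieved and not an artifact of discarded truncation error; this requires arguing that the truncated update remains an approximate contraction toward $\optVals$ and that its per-coordinate variance is faithfully captured by $\bsigma_{\optVals}$, combined with a union bound over all $\Atot$ state-action pairs and all inner iterations (amplified via median-of-means) to meet the $1-\delta$ guarantee. Finally, extracting an $\epsilon$-optimal \emph{policy} rather than just $\epsilon$-optimal values requires maintaining a certified lower bound on the value of the induced greedy policy, so that halting at $\norm{\bv - \optVals}_\infty \le \epsilon$ also certifies $\epsilon$-optimality of that policy.
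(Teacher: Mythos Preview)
Your high-level architecture (halving epochs, per-epoch offset estimate, inner value-iteration with variance reduction, truncation) matches the paper, and you correctly identify that the $(1-\gamma)^{-3}\epsilon^{-2}$ term comes from the Bernstein-type analysis of the offset via the total-variance bound $\normsigma = O((1-\gamma)^{-3/2})$. However, your inner loop as described would \emph{not} yield the improved $(1-\gamma)^{-2}$ additive term, and the reason is the specific flavor of variance reduction you chose.

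You propose to estimate the correction $\transMat(\bv_k - \tilde{\bv})$ at each inner step $k$ with $L$ fresh samples (SVRG-style). Even with truncation, $\norm{\bv_k - \tilde{\bv}}_\infty$ is only $O(\alpha)$ at the end of an epoch (since both lie within $\alpha$ of $\optVals$), so the per-sample variance is $O(\alpha^2)$ and you need $\Theta((1-\gamma)^{-2})$ samples per step to hit accuracy $(1-\gamma)\alpha$; across $\Theta((1-\gamma)^{-1})$ inner steps that is $\Theta((1-\gamma)^{-3})$, exactly the old bound. Your sentence ``truncation forces the effective per-step variance to be governed by $\bsigma_{\optVals}$'' is where the argument breaks: truncation does not change the variance of $(\bv_k - \tilde{\bv})_{s'}$ for $s' \sim \transProb_a(s)$, and $\bsigma_{\optVals}$ plays no role in the inner-loop analysis.

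The paper's mechanism is different in a way that is essential. It estimates the \emph{per-step increments} $\transMat(\bv^{(t)} - \bv^{(t-1)})$ and accumulates $\bg^{(t)} = \bg^{(t-1)} + \diffVec^{(t)}$ (SARAH-style recursive variance reduction). Truncation is used precisely to force $\norm{\bv^{(t)} - \bv^{(t-1)}}_\infty \le (1-\gamma)\alpha$, a factor $(1-\gamma)$ smaller than $\norm{\bv_k - \tilde{\bv}}_\infty$. The accumulated error $\bg^{(t)} - \transMat(\bv^{(t)} - \bv^{(0)})$ is then a martingale with increments bounded by $(1-\gamma)\alpha/M$ and predictable quadratic variation $\le L((1-\gamma)\alpha)^2/M$; Freedman's inequality (not Bernstein) gives uniform control $\le (1-\gamma)\alpha/8$ once $M = \Theta(L)$, so the total inner-loop cost is $LM = \Theta((1-\gamma)^{-2})$ per state-action pair. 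Without the recursive estimator and the martingale analysis, truncation alone buys nothing; the paper in fact gives a worked example showing that a naive accounting recovers only $(1-\gamma)^{-3}$. Your proposal is missing this recursive-estimator-plus-Freedman step, which is the core new idea.
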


Particularly excitingly, the algorithm in \Cref{thm:main_sample} runs in time nearly-linear in the number of samples whenever $\epsilon = O((1- \gamma)^{-1/2})$ and therefore, provided querying the oracle costs $\Omega(1)$, has a near-optimal runtime for such $\epsilon$! Prior to this work such a near-optimal, non-IPM, runtime (for non-trivially small $\gamma$) was only known for $\epsilon = \otilde(1)$ (\cite{SWWY18}). Similarly, \Cref{thm:main_sample} shows that there are model-free algorithms (which for our purposes we define as an $\otilde(\Atot)$ space algorithm) which are nearly-sample optimal whenever $\epsilon = O((1- \gamma)^{-1/2})$. Previously this was only known for $\epsilon = \otilde(1)$. As discussed in prior-work (\cite{LWCGC20, AKY20}), this large $\epsilon$ regime where $\epsilon = \omega(1)$ is potentially of particular import in large-scale learning settings.

In the offline setting, our main result is an algorithm that uses $\otilde(\nnz{\transMat} + \Atot (1-\gamma)^{-2})$ time. It improves upon the prior, non-IPM, state-of-the-art which use $\otilde(\nnz{\transMat} + \Atot (1-\gamma)^{-3})$ time (\cite{SWWY18}). See Table~\ref{table:offline} for a more complete comparison with prior work.

\begin{restatable}{theorem}{mainOffline}\label{thm:main_offline} In the offline setting, there is an algorithm that uses $\otilde(\nnz{\transMat} + \Atot (1 - \gamma)^{-2})$ time, and computes an $\epsilon$-optimal policy and $\epsilon$-optimal values with probability $1 - \delta$.
\end{restatable}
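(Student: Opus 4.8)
The plan is to establish \Cref{thm:main_offline} by instantiating a high-accuracy \emph{truncated variance-reduced value iteration} scheme (the offline routine \HighPrecision) that solves the DMDP to accuracy $\epsilon$ in a logarithmic number of epochs. First I would spend $\otilde(\nnz{\transMat})$ on preprocessing: build, for each state-action pair $(s,a)$, a sampling data structure (e.g.\ an alias table over the nonzero entries of $\transProb_a(s)$) so that an exact draw from $\transProb_a(s)$ costs $\otilde(1)$, and initialize the iterate inside the known box $[0,(1-\gamma)^{-1}]^{\cS}$ containing $\optVals$.

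The core loop proceeds in epochs. At the start of an epoch with current iterate $\tilde v$, compute the exact one-step backups $\transProb_a(s)^\top \tilde v$ for all $(s,a)$ together with the one-step variances $\bsigma_{\tilde v}$ in a single $\otilde(\nnz{\transMat})$ pass; these form the variance-reduction anchor. Within the epoch, run $H=\otilde((1-\gamma)^{-1})$ approximate value-iteration steps, where each step replaces the exact backup $(Tv_i)_s=\max_{a\in\cA_s}[\br_{s,a}+\gamma\,\transProb_a(s)^\top v_i]$ by the variance-reduced estimator $\br_{s,a}+\gamma\,\transProb_a(s)^\top\tilde v+\gamma\,\widehat{\transProb_a(s)^\top(v_i-\tilde v)}$, with the correction term built from $m=\otilde((1-\gamma)^{-1})$ fresh samples per state-action pair. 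After each step I would \emph{truncate} the iterate back into a shrinking band around $\tilde v$ (and within the monotone box known to contain $\optVals$); this is the new ingredient, capping $\|v_i-\tilde v\|_\infty$ and hence the variance fed to the sampler, while preserving the approximate-fixed-point and monotonicity invariants the contraction argument needs.

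The two things to prove are per-epoch correctness and total cost. For correctness I would use the $\gamma$-contraction of the Bellman operator in $\ell_\infty$ together with a Bernstein/Azuma bound on the accumulated martingale of sampling errors over the $H$ inner steps to show that each epoch reduces $\|v-\optVals\|_\infty$ by a constant factor with probability $1-\delta/\otilde(1)$; a union bound over the $\otilde(\log(\epsilon^{-1}(1-\gamma)^{-1}))$ epochs and over all coordinates and steps then yields overall success probability $1-\delta$. The crux is the variance estimate: truncation lets me bound $\mathrm{Var}_{s'\sim\transProb_a(s)}[(v_i-\tilde v)_{s'}]$ in terms of the one-step variance of the near-optimal reference, and combining this across the epoch with the total-variance inequality $\normsigma=O((1-\gamma)^{-3/2})$ is precisely what shows $m=\otilde((1-\gamma)^{-1})$ samples per step suffice --- a factor $(1-\gamma)^{-1}$ fewer than the naive worst-case bound underlying the $\Atot(1-\gamma)^{-3}$ term of \cite{SWWY18}. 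Summing costs gives $\otilde(1)\cdot[\nnz{\transMat}+H\cdot\Atot\cdot m]=\otilde(\nnz{\transMat}+\Atot(1-\gamma)^{-2})$.

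Finally, to return an $\epsilon$-optimal policy and not merely $\epsilon$-optimal values, I would perform one further backup but round to a policy through the offset correction \apxOffset rather than a single greedy step, since naive greedy rounding of an $\epsilon$-accurate value loses a $(1-\gamma)^{-1}$ factor whereas the offset trick certifies an $O(\epsilon)$-optimal policy at no asymptotic cost. The main obstacle I anticipate is the coupled variance-and-truncation analysis: showing simultaneously that truncation never discards the progress guaranteed by the contraction (so the invariants survive) and that it tightens the per-step variance enough to justify the reduced sample count, all while the martingale-error accumulation over an entire $(1-\gamma)^{-1}$-length epoch stays below the target contraction gap with high probability.
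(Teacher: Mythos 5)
Your overall architecture matches the paper's: $\otilde(\nnz{\transMat})$ preprocessing for $\otilde(1)$-time sampling, $\otilde(1)$ error-halving epochs each anchored by an exact $O(\nnz{\transMat})$ computation of $\bP\bv$, an inner loop of $\otilde((1-\gamma)^{-1})$ truncated approximate value-iteration steps with $\otilde((1-\gamma)^{-1})$ fresh samples per state-action pair per step, and a monotonicity/underestimate invariant. However, there is a genuine gap in the one step that carries the whole improvement: your correction term is the SVRG-style estimator $\widehat{\transProb_a(s)^\top(v_i-\tilde v)}$, a \emph{direct} fresh estimate of the difference from the epoch anchor. Truncation cannot help this estimator enough: for the epoch to make progress, $\|v_i-\tilde v\|_\infty$ must be allowed to grow to $\Omega(\alpha)$ by the end of the epoch (indeed, with per-step truncation at level $(1-\gamma)\alpha$ it reaches $i(1-\gamma)\alpha$ at step $i$), so a direct estimate at step $i$ has standard deviation $\approx i(1-\gamma)\alpha/\sqrt{m}$, and forcing the accumulated error over the epoch below the contraction gap requires $m=\Omega((1-\gamma)^{-2})$ samples per step --- which recovers exactly the $\Atot(1-\gamma)^{-3}$ term of \cite{SWWY18} and no better. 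The paper's improvement comes from the \emph{recursive} (SARAH-style) scheme: it estimates each consecutive difference $\transProb_a(s)^\top(\bv^{(\ell)}-\bv^{(\ell-1)})$, whose $\ell_\infty$-norm truncation caps at $(1-\gamma)\alpha$ (not $\alpha$), and accumulates these estimates; the errors then form a martingale whose increments add in quadrature, and Freedman's inequality gives total deviation $O(\sqrt{T/M}\cdot(1-\gamma)\alpha)$, so $M=\otilde(T)=\otilde((1-\gamma)^{-1})$ suffices (\Cref{lemma:recursive-vr}, \Cref{corr:martingale}). Your mention of an ``accumulated martingale'' gestures at this, but with fresh direct estimates there is no telescoping martingale to exploit, so as written the variance accounting does not close.

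Two smaller points. First, the total-variance inequality $\normsigma=O((1-\gamma)^{-3/2})$ that you name as the crux plays no role in the offline theorem --- the anchor $\bP\bv$ is computed exactly there; that inequality is only needed in the sample setting (\Cref{thm:main_sample}) to control the error of an \emph{estimated} anchor. Second, for policy extraction the paper does not round at the end: it maintains the invariant $\bv^{(\ell)}\le\cT_{\pi^{(\ell)}}(\bv^{(\ell)})$ throughout by only updating a coordinate's action when its (truncated, downward-biased) value estimate improves, which sandwiches $\bv^{\pi^{(L)}}$ between $\bv^{(L)}$ and $\bv^\star$ and avoids the $(1-\gamma)^{-1}$ loss you worry about without any extra machinery.
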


The method of \Cref{thm:main_offline} runs in nearly-linear time when $(1 - \gamma)^{-1} \leq (\nnz{\transMat} / \Atot)^{1/2}$, i.e., the discount factor is not too small relative to the average sparsity of rows of the transition matrix. Prior to this paper, such nearly-linear, non-IPM, runtimes (for non-trivially small $\gamma$) were only known for $(1 - \gamma)^{-1} \leq (\nnz{\transMat} / \Atot)^{1/3}$ (\cite{SWWY18}). \Cref{thm:main_offline} expands the set of DMDPs which can be solved in nearly-linear time. The space usage and input access for this offline algorithm differs from the algorithm in Theorem~\ref{thm:main_sample} in that the algorithm in \Cref{thm:main_offline} assumes that access to the transition $\transMat$ is provided as input and uses this to compute matrix-vector products with value vectors. The algorithm in \Cref{thm:main_offline} also requires access to samples from the generative model; if access to the generative model is not provided as input, then using the access to $\transMat$, the algorithm can build a $\otilde(\nnz \transMat)$ data-structure so that queries to the generative model can be implemented in $\otilde(1)$ time (e.g., see discussion in \cite{SWWY18}). Hence, if matrix-vector products and queries to the generative model can be implemented in $\otilde(|\Atot|)$-space then so can the algorithm in \Cref{thm:main_offline}.

\begin{table}[ht]
\centering
\caption{Running times to compute $\epsilon$-optimal policies in the offline setting. In this table, $E$
denotes an upper bound on the ergodicity of the MDP. 
\\ } 
\begin{tabular}{@{}>{\centering\arraybackslash}m{6cm}cc@{}}  
\toprule
\textbf{Algorithm} & \textbf{ Runtime }  & \textbf{Space} \\ 
\toprule
Value Iteration \cite{tseng1990solving, littman2013complexity} &  $\tilde{\bigO}\paren{{\nnz \bP}(1-\gamma)^{-1} }$ & $\tilde{\bigO}(\nnz \bP)$ \\ 
\midrule
Empirical QVI \cite{AMK13} & $\tilde{\bigO}\paren{\nnz \bP 
+ {{\Atot}}{(1-\gamma)^{-3}\epsilon^{-2}}}$ & $\tilde{\bigO}(\nnz \bP)$\\
\midrule
Randomized Primal-Dual Method \cite{wang2017randomized} & $\tilde{\bigO}\paren{\nnz \bP + 
E {\Atot}{(1-\gamma)^{-4} \epsilon^{-2}}}$ & $\tilde{\bigO}(\Atot)$ \\
\midrule
High Precision Variance-\\
Reduced Value Iteration \cite{SWWY18} &  $\tilde{\bigO}\paren{\nnz \bP + {\Atot}{(1-\gamma)^{-3}}}$ & $\tilde{\bigO}(\Atot)$ \\ 
\midrule
Algorithm~\ref{alg:highprec-randomized-value-iteration} \textbf{This Paper} &  $\tilde{\bigO}\paren{\nnz \bP + {\Atot}{ (1-\gamma)^{-2}}}$ & $\tilde{\bigO}(\Atot)$ \\ 
\bottomrule
\end{tabular}
\label{table:offline} 
\end{table}

\paragraph{Exact DMDP Algorithms.} In our our comparison of offline DMDP algorithms in Table~\ref{table:offline}, we ignored $\poly(\log(\epsilon^{-1}))$-factors. Consequently, we did not distinguish between algorithms which solve DMDPs to high accuracy, i.e., only depend on $\epsilon$ polylogarithmically, and those which solve it \emph{exactly}, e.g., have no dependence on $\epsilon$. There is a line of work on designing such exact methods and the current state-of-the-art is policy iteration, which can be implemented in $\tilde{\bigO}(\abs{\cS}^2 \Atot^2 (1-\gamma)^{-1})$ time (\cite{ye2011simplex, scherrer2013improved}) and a combinatorial interior point method that can be implemented in $\tilde{\bigO}(\cA^4_{\text{tot}})$ time (\cite{ye05} with no dependence on $\epsilon$. Note that these methods obtain improved runtime dependence on $\epsilon$ at the cost of larger dependencies on $\abs{\cS}$ and $\Atot$.  

\begin{table}[ht]
\centering
\caption{Query complexities to compute $\epsilon$-optimal policy in the sample setting. $M_{\mathrm{erg}}$ denotes an upper bound on the MDP's ergodicity. Here, \emph{model-free} refers to $\tilde{\bigO}(\Atot)$ space methods.}
\setlength{\tabcolsep}{3pt} 
\begin{tabular}{@{}>{\centering\arraybackslash}m{6cm}ccc@{}} 
\toprule
\textbf{Algorithm} & \textbf{ Queries }  & $\epsilon$ range & \textbf{Model-Free}\\ 
\toprule
Phased Q-learning \cite{kearns1998finite} &  $\tilde{\bigO}\paren{\frac{\Atot}{(1-\gamma)^7\epsilon^2}}$ & $(0, (1-\gamma)^{-1}]$ & Yes \\ %
\midrule
Empirical QVI \cite{AMK13} & $\tilde{\bigO}\paren{\frac{{\Atot}}{(1-\gamma)^{3}\epsilon^{2}}}$ & $( 0, ({(1-\gamma)\abs{\cS}})^{-1/2}]$ & No \\
\midrule
Sublinear Variance-Reduced \\
Value Iteration \cite{SWWY18} &  $\tilde{\bigO}\paren{\frac{\Atot}{(1-\gamma)^{4}\epsilon^{2}}}$ & $(0, (1-\gamma)^{-1}]$ & Yes \\ 
\midrule
Sublinear Variance-Reduced \\
Q Value Iteration \cite{sidfordNearOpt} &  $\tilde{\bigO}\paren{\frac{\Atot}{(1-\gamma)^{3}\epsilon^{2}}}$ & $(0, 1]$ & Yes\\ %
\midrule
Randomized Primal-\\
Dual Method \cite{wang2017randomized} & $\tilde{\bigO}\paren{\frac{M_{\mathrm{erg}} {\Atot}}{(1-\gamma)^{4} \epsilon^2}}$ & $(0, {{(1-\gamma)}}^{-1}$ & Yes \\
\midrule
Empirical MDP\\
+ Planning \cite{AKY20} & $\tilde{\bigO}\paren{\frac{ {\Atot}}{(1-\gamma)^{3} \epsilon^2}}$ & $(0, {{(1-\gamma)}}^{-1/2}]$ & No \\
\midrule
Perturbed Empirical MDP, \\
Conservative Planning \cite{LWCGC20} & $\tilde{\bigO}\paren{\frac{{\Atot}}{(1-\gamma)^{3} \epsilon^2}}$ & $(0, {{(1-\gamma)^{-1}}}]$ & No \\
\midrule
Algorithm~\ref{alg:sublinear-randomized-value-iteration} \textbf{This Paper} & $\tilde{\bigO}\paren{\frac{\Atot}{(1-\gamma)^{3}\epsilon^{2}}}$ & $(0, {{(1-\gamma)^{-1/2}}}]$ & Yes\\ %
\bottomrule
\end{tabular} 
\label{table:sampling}
\end{table}

\paragraph{Comparison with IPM Approaches.} In the offline setting, \cite{SWWY18} showed how to reduce solving DMDPs to an $\ell_1$-regression problem in the matrix $\bP \in \R^{\Aset \times \cS}$. For $\ell_1$ regression in a matrix $\bm{A} \in \R^{n \times d}$ for $n > d$, \cite{lee2014path} provides an algorithm that runs in $\tilde{\bigO}(d^{0.5}(\nnz A + d^2))$-time, \cite{brand2021} provides an algorithm that runs in $\tilde{\bigO}(nd + d^{2.5})$, and  \cite{cohen2020solving,van2020deterministic,jiang2021faster} yields an algorithm that runs in $\otilde(\Atot^\omega)$ time for the current value of $\omega < 2.371552$ in \cite{williams2024new}. These offline IPM approaches can be coupled with model-based approaches to yield algorithms in the sample setting. \cite{LWCGC20} shows that given a DMDP $\cM$, with $\otilde\paren{\Atot (1-\gamma)^{-2} \epsilon^{-3}}$ queries to the generative model and time, it is possible to construct a DMDP $\hat{\cM}$ such that given a DMDP $\cM$, an optimal policy in $\cM$ yields an $\epsilon$-optimal policy for $\cM$. Consequently, provided polynomial accuracy in computing the policy suffices, applying the IPMs to $\hat{\cM}$ yields runtimes of $\otilde(\nnz \bP \sqrt{\abs{\cS}} + \abs{\cS}^{2.5})$ (\cite{lee2014path}), $\otilde(\Atot \abs{\cS} + \abs{\cS}^{2.5})$ (\cite{brand2021}), and $\otilde(\Atot^\omega)$ time \cite{cohen2020solving}. This combination of model-based and IPM-based approaches use super-quadratic time and space, nonetheless, they may yield better runtimes than Theorem~\ref{thm:main_offline} when $\gamma$ is sufficiently large relative to $\cS$ and $\Atot$ in the offline setting, or when, additionally, $\epsilon$ is sufficiently small relative to $\cS$ and $\Atot$ in the sample setting.

\subsection{Overview of approach }
\label{sec:approach}

Here we provide an overview of our approach to proving Theorem~\ref{thm:main_sample} and Theorem~\ref{thm:main_offline}. We motivate our approach from previous methods and discuss the main obstacles and insights needed to obtain our results. For simplicity, we focus on the problem of computing $\epsilon$-optimal values and discuss computing $\epsilon$-optimal policies at the end of this section.

\paragraph{Value iteration.}
Our approach stems from classic value-iteration method (\cite{tseng1990solving, littman2013complexity}) for computing $\epsilon$-optimal and its more modern $Q$-value and stochastic counterparts (\cite{AMK13, sidfordNearOpt, yu2018approximate, hamadouche2021comparison, zobel2005empirical, kalathil2021empirical}).
As the name suggests, value iteration proceeds in iterations $t = 0, 1, \ldots$ computing \emph{values}, $\bv^{(t)} \in \R^{S}$. Starting from initial $\bv^{(0)} \in \R^{S}$, in iteration $t$, the value vector $\bv^{(t)}$ is computed as the result of applying the (Bellman) value operator $\cT: \R^{\cS} \mapsto \R^{\cS}$, i.e.,
\begin{equation}\label{eq:valiter_step}
\vals^{(t)} \gets \cT(\vals^{(t - 1}) \text{ where }
\cT(\vals)(s) \defeq \max_{a \in \cA_s} (\br_{a}(s) + \gamma \bmp_{a}(s)^\top \vals)
\text{ for all $s \in \cS$ and $\bv \in \R^{\cS}$}\,.
\end{equation}
It is known that the value operator is $\gamma$-contractive and therefore, $\norm{ \cT(\vals) - \optVals}_\infty \leq \gamma \norm{\vals - \optVals}_\infty$ for all $v \in \R^{\cS}$ (\cite{littman2013complexity, tseng1990solving, SWWY18}). If we initialize $\vals^{(0)} = \vzero$ then since $\norm{\optVals}_\infty \leq (1 - \gamma)^{-1}$ \cite{tseng1990solving, littman2013complexity}, we see that $\Vert\vals^{(t)} - \optVals \Vert_\infty \leq \gamma^t \Vert\vals^{(0)} - \optVals\Vert_\infty
\leq \gamma^t (1 - \gamma)^{-1} \leq (1 - \gamma)^{-1} \exp(- t (1 - \gamma)).$ Thus, $\vals^{(t)}$ are $\epsilon$-optimal values for any $t \geq (1 - \gamma)^{-1} \log(\epsilon^{-1} (1 - \gamma)^{-1})$. This yields an $\otilde(\nnz{\transMat} (1-\gamma)^{-1})$ time algorithm in the offline setting.

\paragraph{Stochastic value iteration and variance reduction.}

To improve on the runtime of value iteration and apply it in the sample setting, a line of work implements \emph{stochastic} variants of value iteration (\cite{AMK13, SWWY18, sidfordNearOpt, wang2017randomized, AKY20, LWCGC20}). Those methods take approximate value iteration steps where the \emph{expected utilities} $\bm{p}_{a}(s)^\top \vals$ in \eqref{eq:valiter_step} for each state-action pair are replaced by a \emph{stochastic estimate} of the expected utilities. In particular, $\bm{p}_{a}(s)^\top \vals = \E_{i \sim \bm{p}_{a}(s)} \vals_i$, i.e., the expected value of $\vals_i$ for $i$ drawn from the distribution given by $\bm{p}_{a}(s)$. This approach is compatible in the sample setting, as computing $\vals_i$ for $i$ drawn from $\bm{p}_{a}(s)$ yields an unbiased estimate of $\bm{p}_{a}(s)^\top \vals$ with $1$ query and $O(1)$ additional time. 

State-of-the-art model-free methods in the sample setting (\cite{sidfordNearOpt}) and non-IPM runtimes in the offline setting (\cite{sidfordNearOpt}) improve further by more carefully approximating the \emph{expected utilities} $\bm{p}_{a}(s)^\top \vals$ of each state-action pair $(s, a) \in \Aset.$ Broadly, given an arbitrary $\vals^{(0)}$ they first compute $\bx \in \R^{\cA}$ that approximates $\transMat \vals^{(0)}$, i.e., $\bx_{(s,a)}$ approximates $[\transMat \vals]_{(s,a)} = \bm{p}_{a}(s)^\top \vals$ for all $(s,a) \in \cA$. In the offline setting, $\bx = \transMat \vals^{(0)}$ can be computed directly in $O(\nnz{\transMat})$-time. In the sample setting, $\bx \approx \transMat \vals^{(0)}$ can be approximated to sufficient accuracy using multiple queries for each state-action pair. Then, in each iteration $t$ of the algorithm, fresh samples are taken to compute $\bg^{(t)} \approx \transMat (\bv^{(t - 1)} - \bv^{(0}))$ and perform the following update:
\begin{equation}
\label{eq:apx_val_iter}
\bv^{(t)}(s) \gets \max_{a \in \cA_s} (\br_{a}(s) + \gamma (\bx_{s,a} + \bg_{s,a}^{(t)})
\text{ for all $s \in \cS$ and $\bv \in \R^{\cS}$}\,.
\end{equation}
The advantage of this approach is that sampling errors for estimating $\transMat (\bv^{(t - 1)} - \bv^{(0)})$ depend on the magnitude of $\bv^{(t - 1)} - \bv^{(0)}$. After approximately computing $\bx$, the remaining task of computing $\bg^{(t)} \approx \transMat (\bv^{(t - 1)} - \bv^{(0}))$ so that $\bx + \bg^{(t)} \approx \transMat \bv^{(t)}$ may be easier than the task of directly estimating $ \transMat \bv^{(t)}$. Due to similarities of this approach to variance-reduced optimization methods, e.g. (\cite{johnson2013accelerating, nguyen2017sarah}), and the potential improvement in the variance in the sampling task of computing $\bg^{(t)}$, this technique is called \emph{variance reduction} \cite{SWWY18, nguyen2017sarah}.

\cite{SWWY18, sidfordNearOpt}, showed that if $\bx$ is computed sufficiently accurately and $\bv^{(0)}$ are $\alpha$-optimal values then applying \eqref{eq:apx_val_iter} for $t = \Theta((1-\gamma)^{-1})$ yields $v^{(t)}$ that are $\alpha/2$-optimal values in just $\tilde{\bigO}(\Atot(1-\gamma)^{-3})$ time and samples! \cite{SWWY18} leverages this technique to compute $\epsilon$-optimal values in the offline setting in $\otilde(\nnz{\transMat} + \Atot (1 - \gamma)^{-3})$ time. \cite{sidfordNearOpt} uses a similar approach to compute $\epsilon$-optimal values in $\otilde(\Atot [(1 - \gamma)^{-3} \epsilon^{-2} + (1 - \gamma)^{-3})$ time and samples in the sample setting. A key difference in \cite{SWWY18} and \cite{sidfordNearOpt} is the accuracy to which the initial approximate utility $\bx \approx \transMat \vals^{(0)}$ must be computed.

\paragraph{Recursive variance reduction.}

To improve upon the prior model-free approaches of \cite{SWWY18, sidfordNearOpt} we improve how exactly the variance reduction is performed. We perform a similar scheme as in \eqref{eq:apx_val_iter} and use essentially the same techniques as in \cite{sidfordNearOpt, SWWY18} towards estimating $\bx$. Where we differ from prior work is in how we estimate the change in approximate utilities $\bg^{(t)} \approx \transMat (\bv^{(t - 1)} - \bv^{(0}))$. Rather than directly sampling to estimate this difference we instead sample each individual $\transMat (\bv^{(t)} - \bv^{(t-1)})$ and maintain the sum. Concretely, we compute $\diffVec^{(t)}$ such that 
\begin{align}\label{eq:diffvec}
    \diffVec^{(t)}\approx \transMat [\bv^{(t)} - \bv^{(t - 1)}], 
\end{align}
so that these recursive approximations telescope. More precisely we set
\begin{align}\label{eq:telescope}
    \bg^{(t)} \gets \bg^{(t-1)} + \diffVec^{(t)} \approx \transMat (\bv^{(t - 1)} - \bv^{(0}))
    \text{ where } \bg^{(0)} = \vzero\,.
\end{align}

This difference is perhaps similar to how methods such as SARAH (\cite{nguyen2017sarah}) differ from SVRG (\cite{johnson2013accelerating}). Consequently, we similarly call this approximation scheme \emph{recursive variance reduction}. Interestingly, in constrast to the finite sum setting considered in \cite{johnson2013accelerating,nguyen2017sarah}, in our setting, recursive variance reduction for solving DMDPs ultimately leads to direct quantitative improvements on worst case complexity.

To analyze this recursive variance reduction method, we treat the error in $\bg^{(t)} \approx \transMat (\bv^{(t)} - \bv^{(0)})$ as a martingale and analzye it using Freedman's inequality \cite{freedman1975tail} (as stated in \cite{tropp2011freedman}).  The hope in applying this approach is that by better bounding and reasoning about the changes in $\bv^{(t)}$, better bounds on the error of the sampling could be obtained by leveraging structural properties of the iterates. Unfortunately, without further information about the change in $\bv^{(t)}$ or larger change to the analysis of variance reduced value iteration, in the worst case, the variance can be too large for this approach to work naively. Prior work (\cite{SWWY18}) showed that it sufficed to maintain that $\smallnorm{\bg^{(t)} - \transMat \bv^{(t)}}_\infty \leq \bigO((1 - \gamma) \alpha)$. However, imagine that $\bv^* = \alpha \vones$, $\bv^{(0)} = \vzero$, and in each iteration $t$ one coordinate of $\bv^{(t)} - \bv^{(t - 1)}$ is $\Omega(\alpha)$.
If $|\cS| \approx (1 - \gamma)^{-1}$ and $\smallnorm{\bm{p}_{a}(s)}_\infty = O(1/|\cS|)$ for some $(s,a) \in \cA$ then the variance of each sample used to estimate $\bm{p}_{a}(s)^\top (\bv^{(t)} - \bv^{(t - 1)}) = \Omega(1/|\cS|) = \Omega((1 - \gamma))$. Applying Freedman's inequality, e.g., \cite{tropp2011freedman}, and taking $b$ samples for each $O((1- \gamma)^{-1})$ iteration would yield, roughly, $\smallnorm{\bg^{(t)} - \transMat(\bv^{(t)} - \bv^{(0)})}_\infty = O((1 - \gamma)^{-1} (1 - \gamma) / \sqrt{b}) = O(1/\sqrt{b})$. Consequently $b = \Omega((1-\gamma)^{-2})$ and $\Omega((1-\gamma)^{-3})$ samples would be needed in total, i.e., there is no improvement.

\paragraph{Truncated-value iteration.}

The key insight to make our new recursive variance reduction scheme for value iteration yield faster runtimes is to modify the value iteration scheme itself. Note that in the described case for large variance for Freedman's analysis, in every iteration, a single coordiante of $v$ changed by $\Omega(\alpha)$. We observe that there is a simple modification that one make to value iteration to ensure that there is not such a large change between each iteration; simply \emph{truncate} the change in each iteration so that no coordinate of $\bv^{(t)}$ changes too much! To motivate our algorithm, consider the following \emph{truncated} variant of value iteration where
\begin{equation}
\label{eq:truncated_value_iteration}    
\bv^{(t)} = \mathrm{median}(\bv^{(t-1)} - (1 - \gamma) \alpha, \cT(\bv^{(t-1)}), \bv^{(t-1)} + (1 - \gamma) \alpha)
\end{equation}
Where $\mathrm{median}$ applies the median of the arguments entrywise. In other words, suppose we apply value iteration where we decrease or \emph{truncate} the change from $\bv^{(t - 1)}$ to $\bv^{(t)}$ so that it is no more than $\gamma \alpha$ in absolute value in any coordinate. Then, provided that $\bv^{(t)}$ is $\alpha$-optimal then we can show that it is still the case that $\smallnorm{\bv^{(t)} - \optVals} \leq \gamma \smallnorm{\bv^{(t - 1)} - \optVals}$. In other words, the worst-case progress of value iteration is unaffected! This follows immediatly from the fact that $\smallnorm{\bv^{(t)} - \optVals} \leq \gamma \smallnorm{\bv^{(t - 1)} - \optVals}$ in value iteration and the following simple technical lemma.

\begin{lemma}\label{lemma:median_trick} For $a, b, x \in \R^n$ and $\gamma, \alpha > 0$, let $c := \mathrm{median}\{a - (1-\gamma) \alpha \bm{1}, b, a + (1-\gamma) \alpha \bm{1} \}$, where median is applied entrywise. Then, $\norm{c-x}_\infty \leq \max \left\{ \norm{b-x}_\infty, \norm{a-x}_\infty - (1-\gamma) \alpha \right\}. $ Thus, if $\norm{b-x}_\infty \leq \gamma \norm{a-x}_\infty$ and $\norm{a-x}_\infty \leq \alpha$, then $\norm{c-x}_\infty \leq \gamma \alpha$. 
\end{lemma}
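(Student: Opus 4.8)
The plan is to reduce the statement to a one-dimensional claim and prove that by a short case analysis. Since both the entrywise median and the $\ell_\infty$ norm decompose coordinate by coordinate, it suffices to prove, for scalars $a_i, b_i, x_i$ and the shorthand $\beta := (1-\gamma)\alpha$ (which is nonnegative in the relevant regime $\gamma \in (0,1)$), that
\[
|c_i - x_i| \le \max\{\, |b_i - x_i|,\ |a_i - x_i| - \beta \,\},
\]
where $c_i = \mathrm{median}\{a_i - \beta,\, b_i,\, a_i + \beta\}$. Taking the maximum over $i$ and using that $\beta$ is a constant, so that $\max_i(|a_i - x_i| - \beta) = \norm{a-x}_\infty - \beta$, then recovers the vector bound $\norm{c-x}_\infty \le \max\{\norm{b-x}_\infty,\ \norm{a-x}_\infty - \beta\}$.

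The first thing I would record is that $c_i$ is exactly the projection (clamp) of $b_i$ onto the interval $[a_i - \beta,\, a_i + \beta]$: it equals $b_i$ when $|b_i - a_i| \le \beta$, and otherwise equals the nearer endpoint $a_i \pm \beta$. Two structural consequences drive the argument: $c_i$ always lies on the closed segment between $a_i$ and $b_i$, and $|c_i - a_i| \le \beta$ always, with equality precisely when clamping is active.

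The scalar claim then splits into cases. If no clamping occurs, $c_i = b_i$ and the bound is immediate. If clamping is active, say $c_i = a_i + \beta$ with $b_i > c_i$ (the downward-clamp case being symmetric under reflection), I would further split on the location of $x_i$. If $x_i \le c_i$, then $b_i - x_i > c_i - x_i = |c_i - x_i| \ge 0$, so $|b_i - x_i|$ already dominates $|c_i - x_i|$ and the first term of the max suffices. If instead $x_i > c_i = a_i + \beta$, then $x_i > a_i$, so $|a_i - x_i| = x_i - a_i$ while $|c_i - x_i| = x_i - a_i - \beta = |a_i - x_i| - \beta$, giving exact equality with the second term. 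This exhausts the scalar statement. For the concluding implication I would simply substitute the hypotheses into the vector bound: $\norm{b-x}_\infty \le \gamma \norm{a-x}_\infty \le \gamma\alpha$ controls the first term, and $\norm{a-x}_\infty - (1-\gamma)\alpha \le \alpha - (1-\gamma)\alpha = \gamma\alpha$ controls the second, so $\norm{c-x}_\infty \le \gamma\alpha$.

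The one genuine subtlety, and where I expect to spend the most care, is that the target bound is strictly stronger than the estimate $\max\{|a_i - x_i|,\ |b_i - x_i|\}$ that convexity of $t \mapsto |t - x_i|$ gives for any point $c_i$ lying between $a_i$ and $b_i$. The improvement from $|a_i - x_i|$ to $|a_i - x_i| - \beta$ is exactly what lets the contraction factor survive truncation, so it must be secured rather than discarded; the case split above is precisely the device that does this, by showing that whenever the $-\beta$ saving is not attained with equality (the $x_i \le c_i$ sub-case), the $|b_i - x_i|$ term is already in control, and whenever $|a_i - x_i|$ would be binding (the $x_i > c_i$ sub-case), the saving holds exactly.
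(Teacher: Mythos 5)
Your proposal is correct and follows essentially the same route as the paper: both reduce to a per-coordinate case analysis on which branch of the median is active, with the clamped cases yielding the $|b_i - x_i|$ bound on one side and the $|a_i - x_i| - (1-\gamma)\alpha$ bound on the other (the paper phrases this as separate upper and lower bounds on $c_i - x_i$, you as a sub-split on the position of $x_i$ relative to $c_i$, which is the same content). The concluding implication is handled identically in both.
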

\begin{proof} For $(c-x)_i$, there are three cases. First, suppose $a_i - (1-\gamma) \alpha \leq b_i \leq a_i + (1-\gamma) \alpha$.  Then, $\abs{c_i - x_i} = \abs{b_i - x_i} \leq \norm{b-x}_\infty.$ Second, suppose $b_i \leq a_i - (1-\gamma) \alpha \leq a_i + (1-\gamma) \alpha$. Then, $c_i - x_i \geq b_i - x_i \geq -\norm{b-x}_\infty,$ and $c_i - x_i = a_i - (1-\gamma) \alpha - x_i \leq \norm{a-x}_\infty - (1-\gamma) \alpha.$ Lastly, suppose $a_i - (1-\gamma) \alpha \leq a_i + (1-\gamma) \alpha \leq b_i$. Then, $ c_i - x_i \leq b_i - x_i \leq \norm{b-x}_\infty,$ and $c_i - x_i = a_i + (1-\gamma) \alpha - x_i \geq - \norm{a-x}_\infty + (1-\gamma) \alpha.$
\end{proof}

Applying truncated value iteration, we know that $\smallnorm{\bv^{(t)} - \bv^{(t-1)}}_\infty \leq (1 - \gamma)\alpha$. In other words, the worst-case change in a coordinate has decreased by a factor of $(1 - \gamma)$! We show that this smaller movement bound does indeed decrease the variance in the martingale when using the aforementioned averaging scheme. We show this truncation scheme, when applied to our recursive variance reduction scheme \eqref{eq:telescope} for estimating $\transMat (\bv^{(t)} - \bv^{(0)})$, reduces the total samples required to estimate this and halve the error from $\otilde((1-\gamma)^{-3})$ to just $\otilde((1-\gamma)^{-2}$ per state-action pair. 

\paragraph{Our method.}

Our algorithm essentially applies stochastic truncated value iteration using sampling to estimate each $\bg^{(t)} \approx \transMat (\bv^{(t)} - \bv^{(0)})$ as described. A few additional modifications are needed, however, to obtain our results. Perhaps the most substantial is that, as in prior work (\cite{SWWY18, sidfordNearOpt}), we modify our method so that each $\bv^{(t)}$ is an underestimate of $\optVals$ and the $\bv^{(t)}$ increase monotonically. Consequently, we only truncate the increase in the $\bv^{(t)}$ (since they do not decrease, and the median operation reduces to a minimum in Lemma~\ref{lemma:median_trick}.). Beyond simplifying this aspect of the algorithm, this monotonicity technique allows us to easily compute an $\epsilon$-approximate policy as well as values by tracking the actions associated with changed $\bv^{(t)}$ values, i.e., the $\argmax$ in \eqref{eq:apx_val_iter}. By computing initial expected utilities $\bx = \bP \bv^{(0)}$ exactly, we obtain our offline results and by carefully estimating $\bx \approx \bP \bv^{(0)}$ as in \cite{sidfordNearOpt} we obtain our sampling results. Finally, show our method can obtain faster convergence guarantees using the analysis of \cite{zanette2018problem} for deterministic or highly-mixing MDPs.

\subsection{Notation and paper outline}\label{sec:prelim}

\paragraph{General notation.} Caligraphic upper case letters denote sets and operators, lowercase boldface letters (e.g., $\bv, \bm{x}$) denote vectors, and uppercase boldface letters (e.g., $\bm{P}, \bm{I}$) denote matrices. $\mathbf{0}$ and $\mathbf{1}$ denote the vectors whose entries are all 0 or all 1, $[m] \defeq \{1, ...., m\}$, and $\Delta^{n} \defeq \{x \in \R^n: \mathbf{0} \leq x \text{ and } \norm{x}_1 = 1\}$ denotes the unit simplex. For vectors $\bv \in \R^{\cS}$, we use $v_i$ or $\bv(i)$ to denote the $i$-th entry of vector $\bv$. For vectors $\bv \in \R^{\Aset}$, we use $v_a(s)$ to denote the $(s, a)$-th entry of $\bv$, where $(s, a) \in \Aset$. We use $\sqrt{\bv}, \bv^2, \abs{\bv} \in \R^n$ to denote the element-wise square root, square, and absolute value of $\bv$ respectively and $\max\{\bm{u}, \bm{v}\}$ and $\text{median}\{\bm{u}, \bm{v}, \bm{w}\}$to denote the element-wise maximum and median respectively. For vectors $\bv, \bm{x} \in \R^n$, $\bv \leq \bm{x}$ denotes that $\bv(i) \leq \bm{x}(i)$ for each $i \in [n].$ We define $<, \geq , >$ analogously. We call $\bm{x} \in \R^n$ an \emph{$\alpha$-underestimate} of $\bm{y} \in \R^n$ if $\bm{y} - \alpha \vones \leq \bm{x} \leq \bm{y}$ for $\alpha \geq 0$ (see the discussion of monotonicity in \Cref{sec:approach} for motivation).

\paragraph{DMDP.} 
As discussed, the objective in optimizing a DMDP is to find an $\epsilon$-approximate policy $\pi$ and values. For a policy $\pi$, we use $\cT_{\pi}(\bm{u}): \R^{\cS} \mapsto \R^{\cS}$ to denote the value operator associated with $\pi$, i.e., $\cT_\pi(\bm{u})(s) \defeq \br_{\pi(s)}(s) + \gamma \bmp_{\pi(s)}(s)^\top \bm{u}$ for all value vectors $\bm{u} \in \R^{\cS}$ and $s \in \cS$. We let $\bv^{\pi}$ denote the unique value vector such that $\cT_\pi(\bv^{\pi}) = \bv^{\pi}$ and define its variance as $\bsigma_{\bu^\pi} \defeq \bP^{\pi}(\bu^\pi)^2 - (\bP^{\pi} \bu^\pi)^2$. We also let $\bP^{\pi} \in \R^{\cS \times \cS}$ be the matrix such that $\bP_{s, s'} = \bP_{s, \pi(s)}(s')$. The \emph{optimal value vector} $\bv^\star \in \R^{\cS}$ of the optimal policy $\pi^\star$ is the unique vector with $\cT(\bv^\star) = \bv^\star$, and $\bP^\star \in \R^{\cS \times \cS} \defeq \bP^{\pi^\star}$.

\paragraph{Outline.}{ Section~\ref{sec:high_precision_algorithm} presents our offline setting results and Section~\ref{sec:sublinear_algorithm} our sample setting results. Section~\ref{sec:problem_dependent} discusses settings where we can obtain even faster convergence guarantees.}

\section{Offline algorithm}\label{sec:high_precision_algorithm}

In this section, we present our high-precision algorithm for finding an approximately optimal policy in the offline setting. We first define \sampleTrans ~(Algorithm~\ref{alg:SampleTrans}), which approximately computes products between $\bm{p} \in \Delta^S$ and a value vector $\bm{u} \in \R^\cS$ using samples from a generative model. 

\RestyleAlgo{ruled}
\SetKwComment{Comment}{/* }{ */}
\begin{algorithm2e}[h]
\caption{\sampleTrans($\bm{u}, \bm{p}, M, \eta$)}\label{alg:SampleTrans}
\KwInput{Value vector $\bm{u} \in \R^{\cS}, \bm{p} \in \Delta^{\cS}$, sample size $M$, and offset parameter $\eta \geq 0$.}
\For{each $n \in [M]$}{
    Choose $i_n \in \cS$ independently with $\prob{i_n = t} = \bmp(t)$\; 
}

$x = \frac{1}{M} \sum_{n \in [M]} \bm{u}(i_n)$\; 
$\hat{\sigma} = \frac{1}{M} \sum_{n \in [M]} (\bu(i_n))^2 - x^2$\; 
\Return{$\tilde{x}$ where $\tilde{x} = x - \sqrt{2 \eta \hat{\sigma}} - 4\eta^{3/4} \norm{\bu}_\infty - (2/3) \eta \norm{\bu}_\infty$}
\end{algorithm2e}

The following lemma states some immediate estimation bounds on \sampleTrans. 

\begin{restatable}{lemma}{samplingproperties}\label{lemma:sampling_properties} Let $x = \sampleTrans(\bm{u}, \bm{p}, M, 0)$ for $\bmp \in \Delta^n$, $M \in \Z_{>0}$, $\epsilon> 0$, and $\bu \in \R^{\cS}$. Then, $\Ex{x} = \bmp^\top \bu$, $|x| \leq \norm{\bu}_\infty$, and $\VarOf{x} \leq 1/M \norm{\bu}_\infty^2$.
\end{restatable}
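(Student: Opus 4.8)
The plan is to first observe that setting $\eta = 0$ makes all three offset terms in \sampleTrans\ vanish: $\sqrt{2\eta\hat{\sigma}} = 0$, $4\eta^{3/4}\norm{\bu}_\infty = 0$, and $(2/3)\eta\norm{\bu}_\infty = 0$. Hence the returned value reduces to the empirical mean $\tilde{x} = x = \frac{1}{M}\sum_{n \in [M]} \bu(i_n)$, and each of the three claims becomes a standard statement about the sample mean of $M$ i.i.d.\ draws, where a single draw takes value $\bu(t)$ with probability $\bmp(t)$.

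For the expectation, I would use linearity together with the sampling distribution of each $i_n$: since $\Ex{\bu(i_n)} = \sum_{t} \bmp(t)\bu(t) = \bmp^\top \bu$, we get $\Ex{x} = \frac{1}{M}\sum_{n \in [M]} \Ex{\bu(i_n)} = \bmp^\top \bu$. For boundedness, the triangle inequality suffices, as $|x| \leq \frac{1}{M}\sum_{n \in [M]} |\bu(i_n)| \leq \norm{\bu}_\infty$ using that $|\bu(i_n)| \leq \norm{\bu}_\infty$ for every realization.

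For the variance, I would invoke independence of the $i_n$ to drop cross terms, writing $\VarOf{x} = \frac{1}{M^2}\sum_{n \in [M]} \VarOf{\bu(i_n)} = \frac{1}{M}\VarOf{\bu(i_1)}$, and then bound a single draw's variance by its second moment via $\VarOf{\bu(i_1)} \leq \Ex{\bu(i_1)^2} = \sum_{t} \bmp(t)\bu(t)^2 \leq \norm{\bu}_\infty^2$. Combining these yields $\VarOf{x} \leq \frac{1}{M}\norm{\bu}_\infty^2$. None of these steps is a genuine obstacle; the only point requiring mild care is the variance computation, where one must use independence of the samples to avoid cross terms and then discard the nonnegative $(\bmp^\top\bu)^2$ term to arrive at the clean second-moment bound.
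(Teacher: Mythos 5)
Your proof is correct and follows essentially the same route as the paper's: after noting that $\eta=0$ kills the offset terms, the expectation follows from linearity, the bound $|x|\le\norm{\bu}_\infty$ from the definition, and the variance bound from independence together with bounding a single draw's variance by $\sum_t \bmp(t)\bu(t)^2 \le \norm{\bu}_\infty^2$. The paper's proof is just a terser version of the same argument.
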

\begin{proof}  The first statement follows from linearity of expectation and the second from definitions. The third statement follows from independence and that
\[
\VarOf{v_{i_m}} = \sum_{i \in \cS} p_i v_i^2 - (\bmp^\top \bv)^2 \leq \sum_{i \in \cS} p_i \norm{\bv}_\infty^2 = \norm{\bv}_\infty^2
\text{ for any }
m \in [M]\,.
\]
\end{proof}
We can naturally apply \sampleTrans~to each state-action pair in $\cM$ as in the following subroutine, \sampleProd~ (\Cref{alg:SampleTransProd}). If $\bm{x} = \sampleProd(\bm{u}, M, \eta)$, then $\bm{x}(s, a)$ is an estimate of the expected utility of taking action $a \in \cA_s$ from state $s \in \cS$ (as discussed in Section~\ref{sec:approach}). When $\eta > 0$, this estimate may potentially be shifted to increase the probability that $\bm{x}$ underestimates the true changes in utilities; we leverage this in Section~\ref{sec:sublinear_algorithm}.

\RestyleAlgo{ruled}
\SetKwComment{Comment}{/* }{ */}
\begin{algorithm2e}[h]
\caption{\sampleProd($\bm{u}, M, \eta$)}\label{alg:SampleTransProd}
\KwInput{Value vector $\bm{u} \in \R^S$, sample size $M$, and offset parameter $\eta \geq 0$.}
\For{each $(s, a) \in \Aset$}{
    \tcp{In the sample setting, $\bm{p}_{a}(s)$ is passed implicitly.}
    $\bm{x}_a(s) = \sampleTrans(\bm{u}, \bm{p}_{a}(s), M, \eta)$\;
}
\Return{$\bm{x}$}
\end{algorithm2e}

The following algorithm \truncatedRandomizedVI~(Algorithm~\ref{alg:truncated-value-iteration}) takes as input an initial value vector $\bm{v}^{(0)}$ and policy $\pi^{(0)}$ such that $\bm{v^{(0)}}$ is an $\alpha$-underestimate of $\bm{v}^\star$ along with an approximate offset vector $\bm{x}$, which is a $\beta$-underestimate of $\bP \bm{v}^{(0)}$. It runs runs $L = \otilde((1-\gamma)^{-1})$ iterations of approximate value iteration, making one call to \sampleTrans 
 (Algorithm~\ref{alg:SampleTrans}) with a sample size of $M = \tilde{O}((1-\gamma)^{-1})$ in each iteration. The algorithm outputs $\bm{v}^L$ which we show is an $\alpha/2$-underestimate of $\bm{v}^\star$ (Corollary~\ref{corr:halving_error}). 

 \truncatedRandomizedVI~(Algorithm~\ref{alg:truncated-value-iteration}) is similar to variance reduced value iteration \cite{SWWY18}, in that each iteration, we draw $M$ samples and use \sampleTrans~ to maintain underestimates of $\bm{p}_{a}(s)^\top (\bm{v}^{(\ell)} - \bm{v}^{(\ell-1)})$ for each sate-action pair $(s, a)$. However, there are two key distinctions between \truncatedRandomizedVI and variance-reduced value iteration \cite{SWWY18} that enable our improvement. First, we use the recursive variance reduction technique, as described by \eqref{eq:diffvec} and \eqref{eq:telescope}, and second we apply truncation (Line~\ref{line:truncation}), which essentially implements the truncation described in Lemma~\ref{lemma:median_trick}. \Cref{lemma:recursive-vr} below illustrates how these two techniques can be combined to bound the necessary sample complexity for maintaining approximate transitions $\bm{p}_{a}(s)^\top (\bw^{(t)} - \bw^{(0)})$ for a general sequence of $\ell_\infty$-bounded vectors $\{\bw^{(i)}\}_{i=1}^T.$ The analysis leverages Freedman's Inequality \cite{freedman1975tail} as stated in \cite{tropp2011freedman} and restated below.

\begin{theorem}[Freedman's Inequality, restated from \cite{tropp2011freedman}]\label{thm:freedman} Consider a real-valued martingale $\{Y_k: k = 0, 1, \hdots\}$ with difference sequence $\{X_k: k = 1, 2, \hdots\}$ given by $X_k = Y_{k}-Y_{k-1}$. Assume that $X_k \leq R$ almost surely for $k = 1, 2, \ldots$. Define the predictable quadratic variation process of the martingale: $ W_k \defeq \sum_{j=1}^k \Ex{X_j^2 | X_{1}, ..., X_{j-1}}. $ Then, for all $t \geq 0$ and $\sigma^2 > 0$, 
\[
\prob{\exists k \geq 0: Y_k \geq t \text{ and } W_k \leq \sigma^2} \leq \exp \paren{- t^2/(2(\sigma^2 + Rt/3))}\,
\]
\end{theorem}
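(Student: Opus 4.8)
The plan is to prove this via the standard exponential-supermartingale (Chernoff–Cramér) method adapted to martingales, which is how \cite{freedman1975tail, tropp2011freedman} establish it. The engine is a one-sided bound on the conditional moment generating function of each increment. Set $g(\theta) \defeq (e^{\theta R} - 1 - \theta R)/R^2$ for $\theta > 0$. The first step is the pointwise inequality $e^{\theta x} \leq 1 + \theta x + g(\theta) x^2$, valid for every $x \leq R$; this holds because $u \mapsto (e^u - 1 - u)/u^2$ is nondecreasing, so evaluating at $u = \theta x \leq \theta R$ controls the quadratic-remainder coefficient. Letting $\cF_{k-1}$ denote the natural filtration and using $\Ex{X_k \given \cF_{k-1}} = 0$ (the martingale-difference property) together with $X_k \leq R$ almost surely, I take conditional expectations to obtain
\[
\Ex{e^{\theta X_k} \given \cF_{k-1}} \leq 1 + g(\theta)\,\Ex{X_k^2 \given \cF_{k-1}} \leq \exp\Paren{g(\theta)\,\Ex{X_k^2 \given \cF_{k-1}}}.
\]

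Next I would build the exponential supermartingale $S_k \defeq \exp(\theta Y_k - g(\theta) W_k)$, normalized so that $S_0 = 1$ (taking $Y_0 = 0$, noting $W_0 = 0$). The crucial structural point is that $W_k$ is \emph{predictable}: since $W_k - W_{k-1} = \Ex{X_k^2 \given \cF_{k-1}}$ is $\cF_{k-1}$-measurable, the factor $\exp(-g(\theta)(W_k - W_{k-1}))$ may be pulled out of the conditional expectation. Combined with the MGF bound above, this gives $\Ex{S_k \given \cF_{k-1}} = \exp(\theta Y_{k-1} - g(\theta) W_k)\,\Ex{e^{\theta X_k}\given\cF_{k-1}} \leq S_{k-1}$, so $(S_k)$ is a nonnegative supermartingale. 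To handle the $\exists k$ (maximal) form of the statement, I would introduce the stopping time $\tau$ equal to the first index $k$ at which both $Y_k \geq t$ and $W_k \leq \sigma^2$ hold. On the event $E \defeq \{\exists k : Y_k \geq t,\ W_k \leq \sigma^2\}$ we have, at time $\tau$, $\theta Y_\tau - g(\theta) W_\tau \geq \theta t - g(\theta)\sigma^2$ (using $\theta, g(\theta) > 0$), so $S_\tau \geq \exp(\theta t - g(\theta)\sigma^2)$ on $E$. Optional stopping gives $\Ex{S_{k \wedge \tau}} \leq S_0 = 1$ for every $k$, and Fatou's lemma then yields $\exp(\theta t - g(\theta)\sigma^2)\,\prob{E} \leq \Ex{S_\tau \mathbf{1}_E} \leq 1$, whence $\prob{E} \leq \exp(-\theta t + g(\theta)\sigma^2)$ for every $\theta > 0$.

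Finally I would optimize the exponent $-\theta t + \sigma^2(e^{\theta R} - 1 - \theta R)/R^2$ over $\theta > 0$. The minimizer $\theta = R^{-1}\log(1 + Rt/\sigma^2)$ produces the Bennett form $\exp(-\sigma^2 R^{-2} h(Rt/\sigma^2))$ with $h(u) = (1+u)\log(1+u) - u$, and the elementary bound $h(u) \geq u^2/(2(1 + u/3))$ converts this into the stated Bernstein bound $\exp(-t^2/(2(\sigma^2 + Rt/3)))$. I expect the main subtlety to be the maximal-inequality step: one must marry the one-sided deviation event $\{Y_k \geq t\}$ with the level-set constraint $\{W_k \leq \sigma^2\}$ through a single stopping time, and justify passing to the limit in $k$ via optional stopping on the bounded-time stopped process together with Fatou, since the uniform-in-$k$ conclusion cannot be read off from any fixed-$k$ Chernoff bound. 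The pointwise MGF inequality and the choice of $g(\theta)$ form the other technical heart, but they are routine once the monotonicity of $(e^u - 1 - u)/u^2$ is invoked.
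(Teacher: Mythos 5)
The paper does not prove this statement: it is imported verbatim as a known result, cited from \cite{tropp2011freedman} (itself a restatement of \cite{freedman1975tail}), so there is no in-paper proof to compare against. Your proposal is a correct reconstruction of the standard argument from those sources --- the pointwise bound $e^{\theta x}\leq 1+\theta x+g(\theta)x^2$ for $x\leq R$ via monotonicity of $(e^u-1-u)/u^2$, the nonnegative supermartingale $\exp(\theta Y_k-g(\theta)W_k)$ exploiting predictability of $W_k$, optional stopping plus Fatou to handle the maximal event, and the Bennett-to-Bernstein conversion via $h(u)\geq u^2/(2(1+u/3))$ --- with the one implicit normalization $Y_0=0$ that the cited statements also assume.
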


\begin{lemma}\label{lemma:recursive-vr} Let $T \in \Z_{> 0}$ and $\bw^{(0)}, \bw^{(1)}, ..., \bw^{(T)} \in \R^{\cS}$ such that $\norm{\bw^{(i)} - \bw^{(i-1)}}_\infty \leq \tau$ for all $i \in [T]$. Then, for any $\bmp \in \Delta^\cS$, $\delta \in (0,1)$, and $M \geq 2^8 T \log(2/\delta)$ with probability $1-\delta$, 
\begin{align*}
    \abs{\paren{\sum_{i \in [t]} \sum_{j \in [M]} \frac{\sampleTrans(\bw^{(i)} - \bw^{(i-1)}, \bmp, 1, 0)}{M} } - \bmp^\top (\bw^{(t)} - \bw^{(0)})} \leq \frac{\tau}{8}
    \text{ for all 
    }
    t \in [T]\,.
\end{align*}
\end{lemma}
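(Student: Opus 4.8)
The plan is to recognize the estimator as a centered sum of independent random variables and control it with a single application of the maximal form of Freedman's inequality (Theorem~\ref{thm:freedman}). First I would fix $\bmp$, write $Z_{i,j} \defeq \sampleTrans(\bw^{(i)} - \bw^{(i-1)}, \bmp, 1, 0)$ for $i \in [T]$, $j \in [M]$, and observe that these $TM$ calls use fresh, independent randomness, so the $Z_{i,j}$ are mutually independent. By Lemma~\ref{lemma:sampling_properties} applied with $M = 1$ and the hypothesis $\norm{\bw^{(i)}-\bw^{(i-1)}}_\infty \leq \tau$, each $Z_{i,j}$ satisfies $\Ex{Z_{i,j}} = \bmp^\top(\bw^{(i)}-\bw^{(i-1)})$, $\abs{Z_{i,j}} \leq \tau$, and $\VarOf{Z_{i,j}} \leq \tau^2$. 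Writing $\hat{S}_t \defeq \sum_{i \in [t]}\sum_{j \in [M]} Z_{i,j}/M$, linearity and telescoping give $\Ex{\hat{S}_t} = \sum_{i \in [t]} \bmp^\top(\bw^{(i)} - \bw^{(i-1)}) = \bmp^\top(\bw^{(t)} - \bw^{(0)})$, so the quantity to bound is exactly $\abs{\hat{S}_t - \Ex{\hat{S}_t}}$, uniformly over $t \in [T]$.

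Next I would set up the martingale. Order the index pairs $(i,j)$ lexicographically (blocks indexed by $i$, inner index $j$), and let $X_{(i,j)} \defeq (Z_{i,j} - \Ex{Z_{i,j}})/M$ with $Y_k$ the partial sum of the first $k$ centered increments. Because the $Z_{i,j}$ are independent and centered, $\{Y_k\}$ is a martingale, and crucially $Y_{tM} = \hat{S}_t - \Ex{\hat{S}_t}$; thus controlling $\max_k \abs{Y_k}$ controls every block-endpoint $k = tM$ at once, which is precisely the ``for all $t \in [T]$'' requirement. The increments obey $\abs{X_{(i,j)}} \leq 2\tau/M =: R$, and since $\Ex{X_{(i,j)}^2 \mid \text{past}} = \VarOf{Z_{i,j}}/M^2 \leq \tau^2/M^2$, the predictable quadratic variation is deterministically bounded: $W_{tM} \leq tM\cdot \tau^2/M^2 = t\tau^2/M \leq T\tau^2/M =: \sigma^2$.

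Then I would apply Theorem~\ref{thm:freedman} twice, to $\{Y_k\}$ and to $\{-Y_k\}$ (whose increments also satisfy the one-sided bound $R$ and have the same quadratic variation), and union bound. Since $W_k \leq \sigma^2$ holds deterministically, with threshold $u = \tau/8$ this yields
\[
\Pr{\exists k:\ \abs{Y_k} \geq \tfrac{\tau}{8}} \;\leq\; 2\exp\!\paren{-\frac{u^2}{2(\sigma^2 + Ru/3)}} \;=\; 2\exp\!\paren{-\frac{M}{128\,(T + 1/12)}}\,,
\]
where the equality comes from substituting $u^2 = \tau^2/64$, $\sigma^2 = T\tau^2/M$, and $Ru/3 = \tau^2/(12M)$. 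To finish, I would check that the given sample size forces this below $\delta$: since $128(T + 1/12) = 128T + 32/3 \leq 256T = 2^8 T$ for every $T \geq 1$, the hypothesis $M \geq 2^8 T \log(2/\delta)$ gives $M/(128(T+1/12)) \geq \log(2/\delta)$, hence the right-hand side is at most $2 \cdot (\delta/2) = \delta$, as required.

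The only delicate points are bookkeeping rather than conceptual: obtaining uniformity over $t$ from the \emph{maximal} (``$\exists k$'') form of Freedman's inequality rather than a cruder union bound over $t$ that would cost a spurious $\log T$ factor, and tracking constants tightly enough that the clean value $2^8$ closes the argument. Getting the two-sided factor of $2$ and the $Ru/3$ term to fit inside $2^8 T$ (as opposed to a larger constant) is where I would be most careful.
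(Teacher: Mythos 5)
Your proposal is correct and follows essentially the same route as the paper's proof: the same centered-increment martingale over the lexicographically ordered $(i,j)$ pairs, the same bounds $\abs{X_{(i,j)}} \leq 2\tau/M$ and $W_{tM} \leq T\tau^2/M$ via Lemma~\ref{lemma:sampling_properties}, the same two-sided application of Freedman's inequality, and the same constant accounting showing $2^7(T+1/12) \leq 2^8 T$. Your explicit remark that the $Z_{i,j}$ are independent is valid here since the $\bw^{(i)}$ are fixed, though (as you implicitly recognize by still setting up the martingale) it is the martingale form that matters when the lemma is later applied to adaptively chosen iterates.
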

\begin{proof} For each $i \in [T], j \in [M]$, let 
\begin{align*}
    X_{i, j} \defeq \paren{\sampleTrans(\bw^{(i)} - \bw^{(i-1)}, \bmp, 1, 0)- {\bmp^\top (\bw^{(i)} - \bw^{(i-1)})}}/{M}.
\end{align*}
Since $\bmp \in \Delta^{\cS}$, \Cref{lemma:sampling_properties} yields that $\abs{X_{i,j}} \leq \frac{2\tau}{M}$. Next, define $Y_{t, k} \defeq \sum_{i \in [t-1]} \sum_{j \in [M]} X_{i,j} + \sum_{j=1}^k X_{t,j}$. The predictable quadratic variation process (as defined in Theorem~\ref{thm:freedman}) is given by
\begin{align*}
    W_{t,k} &= \sum_{i\in[t-1]} \sum_{j \in [M]} \Ex{X_{i,j}^2 | X_{1, 1:M}, ..., X_{i-1, 1:M}, X_{i, 1:j-1}} + \sum_{j \in [k]} \Ex{X_{t,j}^2 | X_{1, 1:M}, ..., X_{t-1, 1:M}, X_{t, 1:j-1}} \\
    &= \sum_{i \in [t-1]}  \sum_{j \in [M]} \VarOf{\frac{\sampleTrans(\bw^{(i)} - \bw^{(i-1)}, \bmp, 1, 0)}{M}} + \sum_{j \in [k]} \VarOf{\frac{\sampleTrans(\bw^{(t)} - \bw^{(t-1)}, \bmp, 1, 0)}{M}}  \\
    &\leq \sum_{i \in [t]} \sum_{j \in [M]} \frac{\tau^2}{M^2} 
    =\frac{T \tau^2}{M}\,
\end{align*}
where, in the last line we used Lemma~\ref{lemma:sampling_properties} to bound the variance. Now, by telescoping, 
\begin{align*}
    Y_{t, M} &= \paren{\sum_{i \in [t]} \sum_{j \in [M]} \frac{\sampleTrans(\bw^{(i)} - \bw^{(i-1)}, \bmp, 1, 0)}{M} } - \bmp^\top (\bw^{(t)} - \bw^{(0)})
    \text{ for all }
    t \in [T]
\end{align*}

Consequently, applying Theorem~\ref{thm:freedman} twice (once to $Y_{t,M}$ and once to $-Y_{t,M}$ yields
\begin{align*}
    \prob{\exists t \in [T] : \abs{Y_{t, M}} \geq \frac{\tau}{8}} &\leq 2\exp\paren{-\frac{(\tau/8)^2}{2(\frac{T \tau^2}{M}  + \frac{2 \tau}{M} \cdot \frac{\tau}{8} \cdot \frac{1}{3}}} 
    = 2\exp\paren{\frac{-M}{2^7\paren{T + \frac{1}{12}}}} \leq \delta\,.
\end{align*}
\end{proof}

\RestyleAlgo{ruled}
\SetKwComment{Comment}{/* }{ */}
\begin{algorithm2e}[h]
\caption{$\truncatedRandomizedVI( \bv^{(0)}, \pi^{(0)}, \bm{x}, \alpha, \delta) $}\label{alg:truncated-value-iteration}

\KwInput{Initial values $\bv^{(0)} \in \R^S$, which is an $\alpha$-underestimate of $\bv^\star$. }
\KwInput{Initial policy $\pi^{(0)}$ such that $\bv^{(0)} \leq \cT_{\pi^{(0)}}(\bv^{(0)})$.} 
\KwInput{Accuracy $\alpha \in [0, (1-\gamma)^{-1}]$ and failure probability $\delta \in (0, 1)$.}
\KwInput{Offsets $\bm{x} \in \R^{\Aset}$ \tcp*{entrywise underestimate of $\bP \bv^{(0)}$} } 
Initialize $\bm{g}^{(1)} \in \R^{ \Aset}$ and $\hat{\bm{g}}^{(1)} \in \R^{ \Aset}$ to $\mathbf{0}$\;
$L = \ceil{{\log(8)}(1-\gamma)^{-1}}$ \label{line:Lval}\;
$M = \ceil{L \cdot 2^8 \log(2\Atot/\delta)}$ \label{line:Mval}\;
\For{each iteration $\ell \in [L]$ \label{line:for-loop}}{
    $\tilde{\bm{Q}} = \bm{r} + \gamma (\bm{x} + \hat{\bm{g}}^{(\ell)})$\; 
    $\bv^{(\ell)} = \bv^{(\ell - 1)}$ and $\pi^{(\ell)} = \pi^{(\ell-1)}$ \;
    \For{each state $i \in \cS$}  {
        \tcp{Compute truncated value update (and associated action)}
        $\tilde{\bm{v}}^{(\ell)}(i) = \min\{\max_{a \in \cA_i} \tilde{\bm{Q}}_{i, a},\bm{v}^{(\ell-1)} + (1 - \gamma)\alpha\}$ \label{line:truncation} and 
        $\tilde{\pi}^{(\ell)}_i = \argmax_{a \in \cA_i} \tilde{\bm{Q}}_{i, a}$\; 
        \BlankLine
        \tcp{Update value and policy if it improves}
        \lIf{$\tilde{\bm{v}}^{(\ell)}(i) \geq  \bm{v}^{(\ell)}(i)$}{
             $\bm{v}^{(\ell)}(i) = \tilde{\bm{v}}^{(\ell)}(i)$ and $\pi^{(\ell)}_i = \tilde{\pi}^{(\ell)}_i$ \label{line:monotonic}
        }
    }
    \tcp{Update for maintaining estimates of $\transMat (\bm{v}^{(l)} - \bm{v}^{0})$.}
    $\bm{\Delta}^{(\ell)} = \sampleProd(\bm{v}^{(\ell)} - \bm{v}^{(\ell-1)}, M, 0)$ and $\bm{g}^{(\ell+1)} = \bm{g}^{(\ell)} + \bm{\Delta}^{(\ell)}$ \label{line:g}\; 
    $\hat{\bm{g}}^{(\ell+1)} = \bm{g}^{(\ell+1)}-\frac{(1-\gamma)\alpha}{8} \bm{1}$\label{line:ghat}\;
}
\Return{$(\bv^{(L)}, \pi^{(L)})$}
\end{algorithm2e}

By applying Lemma~\ref{lemma:recursive-vr} to the iterates $\bv^{(\ell)}$ in \truncatedRandomizedVI, the following Corollary~\ref{corr:martingale} shows that we can maintain additive $O((1-\gamma)\alpha)$-estimates of the transitions $\bm{p}_{a}(s)^\top (\bv^{(\ell)} - \bv^{(0)})$ using only $\tilde{\bigO}(L)$ samples (as opposed to the $\tilde{\bigO}(L^2)$ samples required in \cite{SWWY18}) per state-action pair. 

\begin{restatable}{corollary}{martingale} \label{corr:martingale} In $\truncatedRandomizedVI$ (Algorithm~\ref{alg:truncated-value-iteration}), with probability $1-\delta$, in Lines~\ref{line:g}, \ref{line:ghat} and \ref{line:Lval}, for all $s \in \cS, a \in \cA_s$, and $\ell\in[L]$, we have $\abs{\bg_a^{(\ell)}(s) - \bmp_a(s)^\top (\bv^{(\ell-1)} - \bv^{(0)})} \leq \frac{1-\gamma}{8}\alpha$ and  therefore $\hat{\bg}_a^{(\ell)}$ is a $(1-\gamma)\alpha/4$-underestimate of $\bmp_a(s)^\top (\bv^{(\ell-1)} - \bv^{(0)}).$
\end{restatable}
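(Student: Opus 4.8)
The plan is to prove the corollary by instantiating \Cref{lemma:recursive-vr} once for each state-action pair $(s,a) \in \Aset$, taking $\bw^{(i)} \gets \bv^{(i)}$ and $\bmp \gets \bmp_a(s)$, and then union bounding over $\Aset$. To do this I must first verify the lemma's hypothesis — a uniform bound $\norm{\bv^{(i)} - \bv^{(i-1)}}_\infty \leq \tau$ — and identify the algorithm's accumulator $\bg^{(\ell)}$ with the telescoping sum appearing in the lemma.

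First I would establish the truncation bound $\tau = (1-\gamma)\alpha$. By construction (Line~\ref{line:truncation}) the candidate update obeys $\tilde{\bm{v}}^{(\ell)}(i) \leq \bv^{(\ell-1)}(i) + (1-\gamma)\alpha$, and the monotone acceptance rule (Line~\ref{line:monotonic}) sets $\bv^{(\ell)}(i) = \max\{\bv^{(\ell-1)}(i), \tilde{\bm{v}}^{(\ell)}(i)\}$. Hence $0 \leq \bv^{(\ell)}(i) - \bv^{(\ell-1)}(i) \leq (1-\gamma)\alpha$ for every coordinate $i$ and iteration $\ell$, giving $\norm{\bv^{(\ell)} - \bv^{(\ell-1)}}_\infty \leq (1-\gamma)\alpha$ deterministically. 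This is exactly where truncation pays off: $\tau$ is smaller by a factor $(1-\gamma)$ than the naive $O(\alpha)$ bound, which is what lets the required sample size scale with $L$ rather than $L^2$.

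Next, unrolling Line~\ref{line:g} gives $\bg^{(\ell)}_a(s) = \sum_{i \in [\ell-1]} \bm{\Delta}^{(i)}_a(s) = \sum_{i \in [\ell-1]}\sum_{j\in[M]} M^{-1}\sampleTrans(\bv^{(i)} - \bv^{(i-1)}, \bmp_a(s), 1, 0)$, which is precisely the quantity bounded in \Cref{lemma:recursive-vr} evaluated at $t = \ell - 1$. With $T = L$ and per-pair failure probability $\delta/\Atot$, the hypothesis $M \geq 2^8 L \log(2\Atot/\delta)$ is met by the choice in Line~\ref{line:Mval}; the lemma then yields, for a fixed $(s,a)$ with probability $1 - \delta/\Atot$, that $\abs{\bg^{(\ell)}_a(s) - \bmp_a(s)^\top(\bv^{(\ell-1)} - \bv^{(0)})} \leq (1-\gamma)\alpha/8$ simultaneously for all $\ell$ (the case $\ell = 1$ being the trivial $|0-0| = 0$). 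A union bound over the $\Atot$ pairs gives the first claim with probability $1-\delta$. The second claim is then immediate: since $\hat{\bg}^{(\ell)}_a(s) = \bg^{(\ell)}_a(s) - (1-\gamma)\alpha/8$ (Line~\ref{line:ghat}), the upper error bound gives $\hat{\bg}^{(\ell)}_a(s) \leq \bmp_a(s)^\top(\bv^{(\ell-1)} - \bv^{(0)})$ while the lower error bound gives $\hat{\bg}^{(\ell)}_a(s) \geq \bmp_a(s)^\top(\bv^{(\ell-1)} - \bv^{(0)}) - (1-\gamma)\alpha/4$, i.e.\ a $(1-\gamma)\alpha/4$-underestimate.

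The main obstacle is subtle: \Cref{lemma:recursive-vr} is stated for a \emph{fixed} sequence $\bw^{(0)}, \ldots, \bw^{(T)}$, whereas in \truncatedRandomizedVI the iterates $\bv^{(i)}$ are random and adaptively produced, since $\bv^{(\ell)}$ is computed from $\hat{\bg}^{(\ell)}$, which depends on the samples drawn in iterations $1, \ldots, \ell-1$. I would resolve this by observing that the martingale argument underlying the lemma only requires the increment $\bv^{(\ell)} - \bv^{(\ell-1)}$ to be predictable with respect to the natural filtration and the samples used in iteration $\ell$ to be fresh and independent given the past — both of which hold here, because $\bv^{(\ell)}$ is fully determined \emph{before} the iteration-$\ell$ samples are drawn in Line~\ref{line:g}. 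Consequently each increment $X_{i,j}$ in the lemma's proof remains conditionally mean-zero, and since the truncation bound $\norm{\bv^{(\ell)} - \bv^{(\ell-1)}}_\infty \leq (1-\gamma)\alpha$ holds almost surely on every realization, the conditional magnitude bound $R = 2\tau/M$ and the quadratic-variation bound $\sigma^2 = T\tau^2/M$ feeding Freedman's inequality (\Cref{thm:freedman}) are unchanged. Thus the adaptivity is harmless and the proof of \Cref{lemma:recursive-vr} applies with $\tau = (1-\gamma)\alpha$.
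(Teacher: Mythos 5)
Your proposal is correct and follows essentially the same route as the paper's proof: instantiate \Cref{lemma:recursive-vr} with $\tau = (1-\gamma)\alpha$ (guaranteed by the truncation in Line~\ref{line:truncation} and the monotone update in Line~\ref{line:monotonic}), identify $\bg^{(\ell)}$ with the telescoping sum at $t = \ell-1$, union bound over the $\Atot$ state-action pairs, and read off the underestimate property from the shift in Line~\ref{line:ghat}. Your explicit discussion of the adaptivity of the iterates $\bv^{(\ell)}$ is a subtlety the paper's write-up glosses over (it simply asserts the identification ``equal in distribution''), and your resolution---that the increments are determined before the iteration-$\ell$ samples are drawn, so the conditional moment bounds feeding Freedman's inequality are unaffected---is the right one.
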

\begin{proof} Consider some $s \in \cS$ and $a \in \cA_s.$ Note that $\bm{g}_a^{(\ell)}(s)$ is equal in distribution to 
\begin{align*}
    \paren{\sum_{i \in [\ell-1]} \sum_{j \in [M]} \frac{\sampleTrans(\bv^{(i)} - \bv^{(i-1)}, \bmp_a(s), 1, 0)}{M} } - \bmp_a(s)^\top (\bv^{(\ell-1)} - \bv^{(0)}).
\end{align*}
Then, by Lemma~\ref{lemma:recursive-vr} and union bound, whenever $M \geq L \cdot 2^8 \log(2\Atot/\delta)$ we have that with probability $1-\delta$, for all $(s, a) \in \Aset$, $\abs{\bg_a^{(\ell)}(s) - \bmp_a(s)^\top (\bv^{(\ell-1)} - \bv^{(0)})} \leq \frac{1-\gamma}{8}\alpha$ and conditioning on this event, we have
$\bmp_{a}(s)^\top (\bv^{(\ell-1)} - \bv^{(0)}) - \frac{1-\gamma}{4} \alpha \leq \hat{\bg}_a^{(\ell)}(s) \leq \bmp_a(s)^\top (\bv^{(\ell-1)} - \bv^{(0)})$ due to the shift in Line~\ref{line:ghat}.
\end{proof}

The following Lemma~\ref{lemma:vrvi} shows that whenever the event in Corollary~\ref{corr:martingale} holds, \truncatedRandomizedVI~(Algorithm~\ref{alg:truncated-value-iteration}) is approximately contractive and maintains monotonicity of the approximate values. By accumulating the error bounds in Lemma~\ref{lemma:vrvi}, we also obtain the following Corollary \ref{corr:halving_error}, which guarantees that \truncatedRandomizedVI~ halves the error in the initial estimate $\bv^{(0)}.$

\begin{restatable}{lemma}{vrvi}\label{lemma:vrvi}
Suppose that for some $\bm{\beta} \in \R^{\cA}_{\geq 0}$, $\bP \bv^{(0)} - \bm{\beta} \leq \bm{x} \leq \bP \bv^{(0)}$ and let $\bm{\beta}_{\pi^\star} \in \R^\cS$ be defined as $\bm{\beta}_{\pi^\star}(s) \defeq \bm{\beta}_{\pi^\star(s)}(s)$ for each $s \in \cS.$ Then, with probability $1-\delta$, at the end of every iteration $\ell \in [L]$ (Line~\ref{line:for-loop}) in $\truncatedRandomizedVI( \bv^{(0)}, \pi^{(0)}, \bm{x}, \alpha, \delta)$, the following hold for $\bxi \defeq \gamma \paren{\frac{(1-\gamma)}{4} \alpha \mathbf{1} + \bm{\beta}_{\pi^\star}}$: 
\begin{align}
    \bv^{(\ell-1)} &\leq \bv^{(\ell)} \leq \cT_{\pi^{(\ell)}}(\bv^{(\ell)}) \label{eq:monotone}, \\
    \label{eq:approximate_contraction}
    0\le \bv^\star-\bv^{(\ell)} & \le \max\paren{\gamma \bP^\star(\bv^\star - \bv^{(\ell-1)}) + \bxi, \gamma(\bv^\star - \bv^{(\ell - 1)})}.
\end{align}
\end{restatable}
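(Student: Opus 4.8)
The plan is to prove \eqref{eq:monotone} and \eqref{eq:approximate_contraction} together by induction on $\ell$, carrying as the induction hypothesis the self-referential inequality $\bv^{(\ell)} \le \cT_{\pi^{(\ell)}}(\bv^{(\ell)})$ from \eqref{eq:monotone}; the base case $\ell = 0$ is exactly the input assumption $\bv^{(0)} \le \cT_{\pi^{(0)}}(\bv^{(0)})$. Throughout I condition on the probability-$(1-\delta)$ event of \Cref{corr:martingale}. The single computation that feeds everything else is a two-sided bound on $\tilde{\bm{Q}}$: writing $[\cT(\bm{u})]_a(s) \defeq \br_a(s) + \gamma\bmp_a(s)^\top\bm{u}$ for the state-action Bellman value, combining the martingale event with the hypothesis $\bP\bv^{(0)} - \bm{\beta} \le \bm{x} \le \bP\bv^{(0)}$ and the shift in Line~\ref{line:ghat} gives, for every $(s,a)$,
\begin{equation*}
[\cT(\bv^{(\ell-1)})]_a(s) - \gamma\big(\bm{\beta}_a(s) + \tfrac{1-\gamma}{4}\alpha\big) \;\le\; \tilde{\bm{Q}}_{s,a} \;\le\; [\cT(\bv^{(\ell-1)})]_a(s).
\end{equation*}
The upper bound drives monotonicity, while the lower bound, evaluated at the optimal action, produces the offset $\bxi$ in the contraction.

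For \eqref{eq:monotone}, the inequality $\bv^{(\ell-1)} \le \bv^{(\ell)}$ is immediate from the monotone update of Line~\ref{line:monotonic}, since a coordinate is overwritten only when the candidate strictly improves it. For $\bv^{(\ell)} \le \cT_{\pi^{(\ell)}}(\bv^{(\ell)})$ I split on whether coordinate $s$ was updated. If it was, then $\pi^{(\ell)}(s)$ is the maximizer and, since truncation only lowers the value, $\bv^{(\ell)}(s) \le \max_a \tilde{\bm{Q}}_{s,a} = \tilde{\bm{Q}}_{s,\pi^{(\ell)}(s)}$; chaining with the upper bound above and then with $\bv^{(\ell)} \ge \bv^{(\ell-1)}$ yields $\bv^{(\ell)}(s) \le [\cT(\bv^{(\ell-1)})]_{\pi^{(\ell)}(s)}(s) \le [\cT_{\pi^{(\ell)}}(\bv^{(\ell)})](s)$. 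If $s$ was not updated, then $\bv^{(\ell)}(s) = \bv^{(\ell-1)}(s)$ and $\pi^{(\ell)}(s) = \pi^{(\ell-1)}(s)$, so the induction hypothesis together with $\bv^{(\ell)} \ge \bv^{(\ell-1)}$ gives $\bv^{(\ell)}(s) \le [\cT_{\pi^{(\ell-1)}}(\bv^{(\ell-1)})](s) \le [\cT_{\pi^{(\ell)}}(\bv^{(\ell)})](s)$. This closes the induction for \eqref{eq:monotone}.

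The lower bound $0 \le \bv^\star - \bv^{(\ell)}$ in \eqref{eq:approximate_contraction} follows from \eqref{eq:monotone}: iterating the monotone operator $\cT_{\pi^{(\ell)}}$ on $\bv^{(\ell)} \le \cT_{\pi^{(\ell)}}(\bv^{(\ell)})$ gives a nondecreasing sequence converging to the fixed point $\bv^{\pi^{(\ell)}}$, so $\bv^{(\ell)} \le \bv^{\pi^{(\ell)}} \le \bv^\star$. I would then extract a key invariant for free: chaining this with the monotone increase $\bv^{(0)} \le \bv^{(1)} \le \cdots$ and the initial underestimate $\bv^\star - \alpha\mathbf{1} \le \bv^{(0)}$ shows $\bv^\star - \bv^{(\ell-1)} \le \bv^\star - \bv^{(0)} \le \alpha\mathbf{1}$ for every $\ell$, with no assumption on the magnitude of $\bm{\beta}$.

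For the contraction upper bound I fix $s$, set $a^\star = \pi^\star(s)$, and combine the lower bound on $\tilde{\bm{Q}}_{s,a^\star}$ with the optimality identity $\bv^\star(s) = \br_{a^\star}(s) + \gamma\bmp_{a^\star}(s)^\top\bv^\star$, recognizing $\gamma(\bm{\beta}_{a^\star}(s) + \tfrac{1-\gamma}{4}\alpha) = \bxi(s)$, to get $\max_a \tilde{\bm{Q}}_{s,a} \ge \tilde{\bm{Q}}_{s,a^\star} \ge \bv^\star(s) - [\gamma\bP^\star(\bv^\star - \bv^{(\ell-1)})](s) - \bxi(s)$. Writing the update as $\bv^{(\ell)}(s) = \min\{\max\{\bv^{(\ell-1)}(s), \max_a\tilde{\bm{Q}}_{s,a}\}, \bv^{(\ell-1)}(s) + (1-\gamma)\alpha\}$ and feeding in this lower bound yields $\bv^\star(s) - \bv^{(\ell)}(s) \le \max\{[\gamma\bP^\star(\bv^\star - \bv^{(\ell-1)})](s) + \bxi(s),\; (\bv^\star(s) - \bv^{(\ell-1)}(s)) - (1-\gamma)\alpha\}$. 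The first argument matches the first term of the target maximum verbatim; the second, coming from the truncation cap, is the main obstacle. This is exactly where the invariant $\bv^\star(s) - \bv^{(\ell-1)}(s) \le \alpha$ from the previous paragraph is indispensable: it rearranges to $(\bv^\star(s) - \bv^{(\ell-1)}(s)) - (1-\gamma)\alpha \le \gamma(\bv^\star(s) - \bv^{(\ell-1)}(s))$, precisely the second argument of the target maximum. Thus the only genuinely delicate point is that the truncation is tamed solely because monotonicity forces $\bv^{(\ell-1)}$ to stay an $\alpha$-underestimate; the remainder is routine bookkeeping with the $\tilde{\bm{Q}}$ sandwich.
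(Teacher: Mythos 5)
Your proposal is correct and follows essentially the same route as the paper's proof: condition on the event of \Cref{corr:martingale}, sandwich $\tilde{\bm{Q}}$ between $\cT(\bv^{(\ell-1)})$ and $\cT(\bv^{(\ell-1)})$ minus the $\gamma(\bm{\beta} + \tfrac{1-\gamma}{4}\alpha)$ offset, prove monotonicity by the same updated/not-updated case split, and handle the truncation case of the contraction via the invariant $\bv^\star - \bv^{(\ell-1)} \le \alpha\mathbf{1}$, exactly as the paper does. The only cosmetic differences are that you derive the lower bound $\bv^{(\ell)} \le \bv^\star$ by iterating $\cT_{\pi^{(\ell)}}$ to its fixed point rather than by direct induction on $\tilde{\bv}^{(\ell)} \le \bv^\star$, and you package the paper's two-case analysis of the update as a single min/max identity; both are equivalent.
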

\begin{proof} In the remainder of this proof, condition on the event that the implications of \Cref{corr:martingale} hold (as they occur with probability $1-\delta$). By Line~\ref{line:truncation} and \ref{line:monotonic} of Algorithm~\ref{alg:truncated-value-iteration}, for all $\ell \in [L]$, 
\begin{align*}
    \bv^{(\ell - 1)} & \le \bv^{(\ell)}\leq \bv^{(\ell - 1)}+(1-\gamma) \alpha\mathbf{1} .
\end{align*}
This immediately implies the lower bound in \eqref{eq:monotone}. 

We prove the upper bound in \eqref{eq:monotone} by induction. In the base case when $\ell = 0$, $\bv^{(0)} \leq \cT_{\pi^{(0)}}(\bv^{(0)})$ holds by assumption. For the $\ell$-th iteration, there are two cases. If $\bv^{(\ell)}(s) > \bv^{(\ell-1)}(s)$ for $s \in S$ then 
\begin{align}
    \bv^{(\ell)}(s) &= \br_{\pi^{(\ell)}}(s) + \gamma\paren{\bx(s) + \hat{\bg}^{(\ell)}_{\pi^{(\ell)}}(s)} \leq \br_{\pi^{(\ell)}}(s) + \gamma \bmp_{\pi^{(\ell)}}(s)^\top \bv^{(\ell-1)}(s) \label{eq:reuse-deriv}\\
    &\leq \cT_{\pi^{(\ell)}}(\bv^{(\ell-1)}) \leq \cT_{\pi^{(\ell)}}(\bv^{(\ell)}) \notag. 
\end{align} 
Otherwise, if $\bv^{(\ell)}(s) = \bv^{(\ell-1)}(s)$, then by the inductive hypothesis, 
\[
\bv^{(\ell)}(s) = \bv^{(\ell-1)}(s) \leq \cT_{\pi^{(\ell-1)}}(\bv^{(\ell-1)})(s) = \cT_{\pi^{(\ell)}}(\bv^{(\ell)})(s)\,.
\]
This completes the proof of \eqref{eq:monotone}.

Next, we prove \eqref{eq:approximate_contraction}. For the lower bound, by induction and \eqref{eq:reuse-deriv}, we have that for each $s \in \cS$ 
\begin{align*}
    \tilde{\bv}^{(\ell)}(s) \leq \max_{a \in \cA_s}\{\br_a(s) + \gamma \bmp_{a}(s)^\top \bv^{(\ell-1)}(s)\} \leq \max_{a \in \cA_s}\{\br_a(s) + \gamma \bmp_{a}(s)^\top \bv^\star(s)\} = \bv^\star, 
\end{align*}
so $\min(\tilde{\bv}^{(\ell)}, \bv^{(\ell-1)} + (1-\gamma) \alpha) \leq \bv^\star$. 

Next, we prove the upper bound of \eqref{eq:approximate_contraction}. For each $(s,a) \in \Aset$ and $\ell \in [L]$, let
\begin{align*}
    \bxi^{(\ell)}_a(s) \defeq \bmp_{a}(s)^\top \bv^{(\ell-1)} - (\bx_a(s) + \hat{\bg}^{(\ell)}_a)(s)),
\end{align*}
and observe that
\begin{align*}
    \bxi^{(\ell)}_a(s) = [\bmp_{a}(s)^\top \bv^{(0)} - \bx_a(s)] +  [\bmp_{a}(s)^\top (\bv^{(\ell-1)} - \bv^{(0)}) - \hat{\bg}^{(\ell)}_a)(s)) ]\leq \bm{\beta}_a(s) + \frac{(1-\gamma)\alpha}{4}. 
\end{align*}
Note that for any $s \in \cS$, 
\begin{align*}
    (\bv^\star - \tilde{\bv}^{(\ell)})(s) &= \max_{a \in \cA_i} [\br_a(s) + \gamma \bmp_{a}(s)^\top \bv^\star(s)] - \max_{a \in \cA_s} [\br_a(s) + \gamma (\bx_a(s) + \hat{\bg}^{(\ell)}_a)(s))] \\
    &\leq [\br_{\pi^\star(s)}(s) + \gamma \paren{\bP^\star \bv^\star}(s)] - \max_{a \in \cA_s} [\br_a(s) + \gamma \bmp_{a}(s)^\top \bv^{(\ell-1)} - \gamma \bxi^{(\ell)}_a(s)]  \\
    &\leq [\br_{\pi^\star(s)}(s) + \gamma \paren{\bP^\star \bv^\star}(s)] - [\br_{\pi^\star(s)}(s) + \gamma (\bP^\star \bv^{(\ell-1)})(s) - \gamma \bxi^{(\ell)}_{\pi^\star(s)}(s) ] \\
    &\leq \gamma \paren{\bP^\star (\bv^\star - \bv^{(\ell-1)})}(s) + \bm{\xi}(s), 
\end{align*}
Consequently, for all $s \in S$, 
\begin{align*}
    (\bv^\star - \tilde{\bv}^{(\ell)})(s) \leq \gamma \bP^\star (\bv^\star - \bv^{(\ell-1)})(s) + \bm{\xi}(s). 
\end{align*}

Consider two cases for $\bv^{(\ell)}(s)$. First, if $\bv^{(\ell)}(s) = \tilde{\bv}^{(\ell)}(s)$ for some $s \in \cS$ then 
\[\paren{\bv^\star - \bv^{(\ell)}}(s) \leq \gamma \paren{\bP^\star \paren{\bv^\star - \bv^{(\ell-1)}}}(s) +\bxi(s)
\]
holds immediately. If not, $\bv^{(\ell)}(s) = \bv^{(\ell-1)}(s) + (1-\gamma)\alpha \leq \tilde{\bv}^{(\ell)}(s)$ and \eqref{eq:monotone} guarantees that
\begin{align*}
    \norm{\bv^\star - \bv^{(\ell-1)}}_\infty \leq \norm{\bv^\star - \bv^{(0)}}_\infty \leq \alpha,  
\end{align*}
which ensures that $(1-\gamma)(\bv^\star - \bv^{(\ell)})(s) \leq (1-\gamma)\alpha $ and yields the results as,
\begin{align*}
    \paren{\bv^\star - \bv^{(\ell)}}(s) = \paren{\bv^\star - \bv^{(\ell-1)}}(s) - (1-\gamma) \alpha \leq \gamma \paren{\bv^\star - \bv^{(\ell - 1)}}(s)\,. 
\end{align*}
\end{proof}

\begin{restatable}{corollary}{halvingerror}\label{corr:halving_error} Suppose that for some $\alpha \geq 0$ and $\bm{\beta} \in \R^{\cA}_{\geq 0}$, $\bP \bv^{(0)} - \bm{\beta} \leq \bm{x} \leq \bP\bv^{(0)}$;  $\bm{v}^{(0)}$ is an $\alpha$-underestimate of $\bv^\star$; and $\bm{v}^{(0)} \leq \cT_{\pi^{(0)}}(\bm{v}^{(0)})$. Let $\bm{\beta}_{\pi^\star} \in \R^\cS$ be defined as $\bm{\beta}_{\pi^\star}(s) \defeq \bm{\beta}_{\pi^\star(s)}(s)$ for each $s \in \cS.$ Let $(\bm{v}^{(L)}, \pi^{(L)}) = \truncatedRandomizedVI(\bv^{(0)}, \pi^{(0)}, \alpha, \delta)$, and $L, M$ be as in Lines~\ref{line:Lval} and \ref{line:Mval}. With probability $1-\delta$, 
\[
\bm{0}\le \bv^\star-\bv^{(L)}\le \gamma^L\alpha\cdot\mathbf{1}+ (\bm{I}-\gamma \bP^\star)^{-1}\bxi
\text{ where }
\bm{\xi} \defeq \gamma \paren{\frac{(1-\gamma)}{4}\alpha \mathbf{1} + \bm{\beta}_{\pi^\star} },
\]
 and $\bv^{(L)} \leq \cT_{\pi^{(L)}}(\bv^{(L)})$. In particular, if $\bm{\beta} = \bm{0}$, then taking $L > {\log(8)}(1-\gamma)^{-1}$ we can reduce the error in the initial value $\bv^{(0)}$ by half: 
\begin{align*}
    \bm{0}\le \bv^\star-\bv^{(L)}\le \frac{1}{2} (\bv^\star-\bv^{(0)}) \leq \alpha/2 \bm{1}.
\end{align*}
Additionally, \truncatedRandomizedVI~ is implementable with only $\otilde(\Atot ML)$ sample queries to the generative model and time and $\tilde{\bigO}(\Atot)$ space. 
\end{restatable}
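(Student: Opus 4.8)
The plan is to condition throughout on the $(1-\delta)$-probability event of \Cref{corr:martingale}, under which every conclusion of \Cref{lemma:vrvi} holds for all $\ell \in [L]$, so that everything below is deterministic. Write $\bm{e}^{(\ell)} \defeq \bv^\star - \bv^{(\ell)}$. The self-consistency claim $\bv^{(L)} \le \cT_{\pi^{(L)}}(\bv^{(L)})$ and the lower bound $\bm{0} \le \bm{e}^{(L)}$ are read off directly from \eqref{eq:monotone} and the lower half of \eqref{eq:approximate_contraction} at $\ell = L$. The real content is therefore the upper bound on $\bm{e}^{(L)}$, which I would obtain by unrolling the per-iteration recursion $\bm{e}^{(\ell)} \le \max(\gamma \bP^\star \bm{e}^{(\ell-1)} + \bxi,\ \gamma \bm{e}^{(\ell-1)})$ supplied by \eqref{eq:approximate_contraction}.

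For the unrolling I would prove by induction on $\ell$ that $\bm{e}^{(\ell)} \le \bm{u}^{(\ell)}$, where $\bm{u}^{(\ell)} \defeq \gamma^\ell \alpha \bm{1} + \sum_{i=0}^{\ell-1} (\gamma \bP^\star)^i \bxi$. The base case $\ell=0$ is $\bm{e}^{(0)} \le \alpha \bm{1}$, i.e.\ the hypothesis that $\bv^{(0)}$ is an $\alpha$-underestimate of $\bv^\star$. The crux, and the one step I expect to be a genuine obstacle, is disposing of the entrywise $\max$ by showing that both of its branches are dominated by $\bm{u}^{(\ell)}$. For the first branch, monotonicity of $\bP^\star$ and the inductive hypothesis give $\gamma \bP^\star \bm{e}^{(\ell-1)} + \bxi \le \gamma \bP^\star \bm{u}^{(\ell-1)} + \bxi$, and because $\bP^\star$ is row-stochastic ($\bP^\star \bm{1} = \bm{1}$) a short index shift of the operator sum yields exactly $\gamma \bP^\star \bm{u}^{(\ell-1)} + \bxi = \bm{u}^{(\ell)}$. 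For the second branch I would use $\gamma \bm{e}^{(\ell-1)} \le \gamma \bm{u}^{(\ell-1)}$ together with $\bxi \ge \bm{0}$ (which holds since $\alpha \ge 0$ and $\bm{\beta} \ge \bm{0}$, whence $\bm{\beta}_{\pi^\star} \ge \bm{0}$) and $\gamma < 1$: these make the difference $\bm{u}^{(\ell)} - \gamma \bm{u}^{(\ell-1)} = \sum_{i=0}^{\ell-2}(1-\gamma)(\gamma\bP^\star)^i \bxi + (\gamma\bP^\star)^{\ell-1}\bxi$ nonnegative. Setting $\ell = L$ and extending the finite sum to the convergent Neumann series (legitimate since each term $(\gamma\bP^\star)^i \bxi$ is nonnegative and the $\ell_\infty \to \ell_\infty$ operator norm of $\gamma \bP^\star$ equals $\gamma < 1$) gives $\bm{e}^{(L)} \le \gamma^L \alpha \bm{1} + (\bm{I} - \gamma \bP^\star)^{-1} \bxi$, the claimed bound.

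For the $\bm{\beta} = \bm{0}$ specialization I would substitute $\bxi = \gamma \tfrac{1-\gamma}{4} \alpha \bm{1}$ and use $\bP^\star \bm{1} = \bm{1}$ once more to get $(\bm{I} - \gamma \bP^\star)^{-1} \bm{1} = (1-\gamma)^{-1} \bm{1}$, so that $(\bm{I} - \gamma \bP^\star)^{-1} \bxi = \tfrac{\gamma \alpha}{4} \bm{1} \le \tfrac{\alpha}{4} \bm{1}$. With $L \ge \log(8)(1-\gamma)^{-1}$ and the elementary bound $\log(1/\gamma) \ge 1-\gamma$, I would estimate $\gamma^L = \exp(-L \log(1/\gamma)) \le \exp(-L(1-\gamma)) \le \tfrac{1}{8}$, giving $\bm{0} \le \bm{e}^{(L)} \le (\tfrac18 + \tfrac14)\alpha\bm{1} \le \tfrac{\alpha}{2}\bm{1}$, i.e.\ $\bv^{(L)}$ is an $\alpha/2$-underestimate of $\bv^\star$ as required.

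Finally, for the resource bounds I would tally the work in \Cref{alg:truncated-value-iteration}: the loop runs $L$ iterations, and the only sampling is one call to $\sampleProd(\cdot, M, 0)$ per iteration, which runs $\sampleTrans$ with $M$ samples once for each of the $\Atot$ state-action pairs, for $\otilde(\Atot M L)$ total queries; since every other per-iteration operation (the entrywise $\max$ over actions, the truncation, and the updates of $\bm{g}, \hat{\bm{g}}, \bv, \pi$) costs $\otilde(\Atot)$, the running time is of the same order. The space bound $\tilde{\bigO}(\Atot)$ follows because only a constant number of length-$\Atot$ or length-$\cS$ vectors need be stored, overwriting them across iterations.
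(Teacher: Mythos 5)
Your proposal is correct and follows essentially the same route as the paper's proof: the same induction on $\ell$ unrolling the two-branch recursion from \Cref{lemma:vrvi} into $\gamma^\ell\alpha\bm{1}$ plus a partial Neumann sum in $\gamma\bP^\star$, the same use of row-stochasticity and nonnegativity of $\bxi$ to dominate both branches, and the same final substitution $(\bm{I}-\gamma\bP^\star)^{-1}\bm{1}=(1-\gamma)^{-1}\bm{1}$ with $\gamma^L\le 1/8$. The only cosmetic difference is your sum's upper index ($\ell-1$ versus the paper's $\ell$), which is immaterial since the extra term is nonnegative.
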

\begin{proof} Condition on the event that the implication of Lemma~\ref{lemma:vrvi} holds. First, we observe that $\bm{0} \leq \bv^\star - \bv_{\pi^{(L)}} \leq \bv^\star-\bv^{(L)}$ follows by monotonicity (Equation \eqref{eq:monotone} of Lemma~\ref{lemma:vrvi}). Next, we show that
\begin{align*}
    \bv^\star-\bv^{(L)}\le \gamma^L\alpha\cdot\mathbf{1}+ (\bm{I}-\gamma \bP^\star)^{-1}\bxi, 
\end{align*}
by induction. We will show that for all $i \in \cS$, \begin{align*}
    \bv^\star - \bv^{(\ell)} \leq \Brac{\gamma^{\ell} \alpha \mathbf{1} + \sum_{k=0}^{\ell} \gamma^k {\bP^\star}^k \bxi}. 
\end{align*}
In the base case when $\ell = 0$, this is trivially true, as $\bv^\star - \bv^{(\ell)} \leq \alpha \mathbf{1}$ by assumption. Assume that the statement is true up to $\bv^{(\ell-1)}$. Now, by Lemma~\ref{lemma:vrvi}, we have two cases for $\brac{\bv^\star - \bv^{(\ell)}}(i)$. 

First, suppose that $\brac{\bv^\star - \bv^{(\ell)}}(i) \leq \gamma \brac{\bv^\star - \bv^{(\ell-1)}}(i)$. Then, note that $\bP^{\star}$ and $\bxi$ are entrywise non-negative, so $\brac{\gamma^{\ell}{\bP^\star}^\ell \bxi}(i) \geq 0$. By inductive hypothesis, and the fact that $\gamma \in (0, 1)$ we have
    \begin{align*}
        \brac{\bv^\star - \bv^{(\ell)}}(i) &\leq \gamma \paren{\gamma^{(\ell-1)} \alpha + \Brac{\sum_{k=0}^{\ell-1} \gamma^k {\bP^\star}^k \bm{\xi}}(i) }\\
        &= \gamma^{\ell} \alpha + \gamma \Brac{\sum_{k=0}^{\ell-1} \gamma^k {\bP^\star}^k \xi}(i) \leq \gamma^{\ell} \alpha + \Brac{\sum_{k=0}^{\ell-1} \gamma^k {\bP^\star}^k \bm{\xi}}(i) \leq \gamma^{\ell} \alpha + \Brac{\sum_{k=0}^{\ell} \gamma^k {\bP^\star}^k \bxi}(i) \\
        &= \Brac{\gamma^{\ell} \alpha \mathbf{1} + \sum_{k=0}^{\ell} \gamma^k {\bP^\star}^k \bxi}(i).
    \end{align*}

Second, suppose that instead, $\brac{\bv^\star - \bv^{(\ell)}}(i) \leq \Brac{ \gamma \bP^{\star} \paren{\bv^\star - \bv^{(\ell-1)}}}(i) + \bxi(i)$. By monotonicity (equation~\eqref{eq:monotone} of Lemma~\ref{lemma:vrvi}) we know that $\bv^\star - \bv^{(\ell-1)} \geq 0$. Moreover, $\bP^\star$ is non-negative, and consequently, we can use the inductive hypothesis as follows: 
    \begin{align*}
        \paren{\bv^\star - \bv^{(\ell-1)}} \leq \Brac{\gamma^{\ell-1} \alpha \mathbf{1} + \sum_{k=0}^{\ell-1} \gamma^k {\bP^\star}^k \xi} \text{,~~hence~~} \bP^\star \paren{\bv^\star - \bv^{(\ell-1)}} \leq \bP^\star \Brac{\gamma^{\ell-1} \alpha \mathbf{1} + \sum_{k=0}^{\ell-1} \gamma^k {\bP^\star}^k \bxi}.
    \end{align*}
We can rearrange terms to obtain the following bound:
    \begin{align*}
        \brac{\bv^\star - \bv^{(\ell)}}(i) &\leq \Brac{\gamma \bP^\star \paren{\gamma^{(\ell-1)} \alpha \mathbf{1} + \sum_{k=0}^{\ell-1} \gamma^k {\bP^\star}^k \bxi}}(i) + \bxi(i) \\
        &= \gamma^{\ell} \alpha \brac{\bP^\star \mathbf{1}}(i) + \Brac{\sum_{k=0}^{\ell-1} \gamma^{k+1} {\bP^\star}^{k+1} \bxi}(i) + \bxi(i) 
        \leq \Brac{\gamma^{\ell} \alpha \mathbf{1} + \sum_{k=0}^{\ell} \gamma^k {\bP^\star}^k \bxi}(i). 
    \end{align*}
Consequently, by induction, the bound holds. When $L > {\log(8)}(1-\gamma)^{-1}$ , $\gamma^L \leq 1/8$ and we have
\begin{align*}
    \bv^\star - \bv_k \leq \gamma^L \alpha \cdot \mathbf{1} + (\bm{I}- \gamma \bm{P}^\star)^{-1} \frac{\gamma(1-\gamma)}{4} \alpha \mathbf{1} \leq \gamma^L \alpha + \gamma \frac{\alpha}{4} \leq \frac{\alpha}{2}.
\end{align*}
Finally, the sample complexity and runtime follow from the algorithm pseudocode. For the space complexity, at each iteration $\ell$ of the outer for loop in \truncatedRandomizedVI, the algorithm needs only to maintain $\hat{\bm{g}}^{(\ell)}, \bm{g}^{(\ell)} \in \R^{\Atot}$, $\bm{v^{(\ell)}} \in \R^{S}$, $\pi^{(L)}$, and at most $M \Atot$ samples in invoking \sampleTrans. 

\end{proof}

Theorem~\ref{thm:main_offline} now follows by recursively applying Corollary~\ref{corr:halving_error}. \HighPrecision~(Algorithm~\ref{alg:highprec-randomized-value-iteration}) provides the pseudocode for the algorithm guaranteed by Theorem~\ref{thm:main_offline}.

\RestyleAlgo{ruled}
\SetKwComment{Comment}{/* }{ */}
\begin{algorithm2e}[h!]
\caption{ $\HighPrecision(\epsilon, \delta)$}\label{alg:highprec-randomized-value-iteration}
\KwInput{Target precision $\epsilon$ and failure probability $\delta\in(0,1)$}
$K = \ceil{\log_2(\epsilon^{-1}(1-\gamma)^{-1})}$, $\bv_0 = \mathbf{0}$, $\pi_0$ is an arbitrary policy, and $\alpha_0 = \frac{1}{1-\gamma}$\;
\For{each iteration $k \in [K]$}{
    $\alpha_k = \alpha_{k-1}/2 = \frac{1}{2^k(1-\gamma)}$\;
    $\bx = \bP\bv_{k-1}$ \label{line:offset}\; 
    $(\bv_k, \pi_k) =\truncatedRandomizedVI(\bv_{k-1}, \pi_{k-1}, \bx, \alpha_{k-1}, 0, \delta/K) $\; 
}
\Return{$(\bv_K, \pi_K)$}
\end{algorithm2e}

\mainOffline*
\begin{proof} To run \HighPrecision, we can implement a generative model from which we can draw samples in $\bigO(\nnz \bP)$ pre-processing time, so that each query to the generative model requires $\otilde(1)$ time. For the correctness, we induct on $k$ 
to show that after each iteration $k$, 
$0 \leq \bv^\star - \bv_{\pi_K} \leq \bv^\star - \bv_K \leq \alpha_{k}$ with probability $1-k\delta/K$. In the base case when $k = 0$, the bound is trivially true as $\norm{\bv^\star}_\infty \leq (1-\gamma)^{-1}$. Now, by Applying Corollary~\ref{corr:halving_error} and a union bound, we see that with probability $1 -k\delta/K$, 
$\bv^\star - \bv_k \leq \frac{\alpha_{k-1}}{2} = \alpha_k$,
whenever $L > {\log(8)}(1-\gamma)^{-1}$. Thus, $\bv_K$ satisfies the required guarantee whenever $\alpha_K \leq \epsilon$, which is guaranteed by our choice of $K$. To see that $\pi_k$ is an $\epsilon$-optimal policy, we observe that Corollary~\ref{corr:halving_error} ensures 
\begin{align*}
    \bv_k \leq \cT_{\pi_k}(\bv_k) \leq \cT_{\pi_k}^2(\bv_k) \leq \cdots \leq \cT_{\pi_k}^\infty(\bv_k) = \bv^{\pi_k} \leq \bv^\star. 
\end{align*}

For the runtime, the algorithm completes only $K =\tilde{\bigO}(1)$ iterations, and can be implemented with $\tilde{\bigO}(1)$ calls to the offset oracle. Each inner loop iteration can be implemented with $\tilde{\bigO}(\Atot L^2) = \tilde{\bigO}\paren{{\Atot}{(1-\gamma)}^{-2}}$ additional time and queries to the generative model. The algorithm only requires $O(\Atot)$ space in order to store offsets, values, and approximate utilities. 
\end{proof}
\section{Sample setting algorithm}\label{sec:sublinear_algorithm}

In this section, we show how to extend the analysis in the previous section in the sample setting, where we do not have explicit access to $\bP$. We follow a similar framework as in \cite{sidfordNearOpt} to show that we can instead estimate the offsets $\bx$ in \HighPrecision~by taking additional samples from the generative model. The pseudocode is shown in \SublineartruncatedRandomizedVI (Algorithm~\ref{alg:sublinear-randomized-value-iteration}.) To analyze the algorithm, we first bound the error incurred when approximating the exact offsets $\bx$ in Line~\ref{line:offset} of \HighPrecision~ (Algorithm~\ref{alg:highprec-randomized-value-iteration}) with approximate offsets $\tilde{\bx} \approx \bP \bv_{k-1}$ computed by sampling from the generative model. 

\RestyleAlgo{ruled}
\SetKwComment{Comment}{/* }{ */}
\begin{algorithm2e}[ht!]
\caption{ $\SublineartruncatedRandomizedVI(\epsilon, \delta)$}\label{alg:sublinear-randomized-value-iteration}
\KwInput{Target precision $\epsilon$ and failure probability $\delta\in(0,1)$}
$K = \ceil{\log_2(\epsilon^{-1}(1-\gamma)^{-1})}$ \label{line:Kvalnew} \; 
$\bv_0 = \mathbf{0}$, $\pi_0$ is an arbitrary policy, and $\alpha_0 = \frac{1}{1-\gamma}$\;
\For{each iteration $k \in [K]$}{
    $\alpha_k = \alpha_{k-1}/2 = {2^{-k}(1-\gamma)^{-1}}$ \label{line:alphaknew}\;
    $N_{k-1} = 10^4 (1-\gamma)^{-3} \max((1-\gamma), \alpha_{k-1}^{-2}) \log(8 \Atot K\delta^{-1})$ \label{line:Nval} \label{line:Nvalnew} \;
    $\eta_{k-1} = N_{k-1}^{-1} \log(8 \Atot K\delta^{-1})$ \label{line:etaNew}\; 
    $\bm{x}_k = \sampleProd(\bv_{k-1}, N_{k-1}, \eta_{k-1})$\; 
    $(\bv_k, \pi_k) =\truncatedRandomizedVI( \bv_{k-1}, \pi_{k-1}, \bm{x}_{k}, \alpha_{k-1}, \delta/K) $ \label{line:vknew}\; 
}
\Return{$(\bv_K, \pi_K)$}
\end{algorithm2e}

\begin{theorem}[Hoeffding's Inequality and Bernstein's Inequality, restated from Lemma E.1 and E.2 of \cite{sidfordNearOpt}]\label{thm:concentration} Let $\bmp \in \Delta^{\cS}$ be a probability vector, $\bm{v} \in \R^n,$ and let $\bm{y} \defeq \frac{1}{m}\sum_{j=1}^m \bm{v}(i_{j})$ where $i_{j}$ are random indices drawn such that $i_{j} = k$ with probability $\bmp(k).$ Define ${\sigma} \defeq (\bmp^\top \bm{v}^2 - (\bmp^\top \bm{v})^2).$ For any $\delta \in (0, 1)$, the following hold, each with probability $1-\delta$: 
\begin{align*}
    &\textnormal{ (Hoeffding's Inequality) } \abs{\bmp^\top \bv - \bm{y}} \leq \norm{\bv}_{\infty} \cdot \sqrt{2m^{-1}\log(2\delta^{-1})}, \\
    &\textnormal{ (Bernstein's Inequality) } \abs{\bmp^\top \bv - \bm{y}} \leq \sqrt{2 m^{-1} {\sigma} \cdot \log(2\delta^{-1})} + (2/3) m^{-1} \norm{\bm{v}}_\infty \cdot \log(2\delta^{-1}). 
\end{align*}
\end{theorem}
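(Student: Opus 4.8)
The statement is a pair of classical concentration inequalities with no MDP-specific content, so the plan is to reduce $\bm{y}$ to an empirical mean of i.i.d.\ samples and then invoke the textbook Hoeffding and Bernstein bounds, inverting each tail estimate to solve for the deviation at confidence $1-\delta$. Concretely, I would set $X_j \defeq \bm{v}(i_j)$ for $j \in [m]$ and observe that the $X_j$ are i.i.d.\ with $\E X_j = \sum_k \bmp(k)\bm{v}(k) = \bmp^\top \bv$, with $\Var(X_j) = \bmp^\top \bv^2 - (\bmp^\top \bv)^2 = \sigma$, and with $\abs{X_j} \leq \norm{\bv}_\infty$ almost surely. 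Thus $\bm{y} = \tfrac1m \sum_j X_j$ is their empirical mean and $\bmp^\top \bv - \bm{y} = \tfrac1m\sum_j (\E X_j - X_j)$, to which the standard bounds apply directly.

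For the Hoeffding bound, each $X_j$ lies in an interval of width at most $2\norm{\bv}_\infty$, so the two-sided Hoeffding inequality gives $\Pr{\abs{\bmp^\top \bv - \bm{y}} \geq t} \leq 2\exp(-2mt^2 / (2\norm{\bv}_\infty)^2)$. Setting the right-hand side equal to $\delta$ and solving for $t$ yields $t = \norm{\bv}_\infty \sqrt{2 m^{-1}\log(2\delta^{-1})}$, which is exactly the claimed bound; the factor $2$ inside the logarithm is precisely the penalty for taking a union over the two tails.

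For the Bernstein bound, I would use the form $\Pr{\abs{\bmp^\top \bv - \bm{y}} \geq t} \leq 2\exp(-mt^2 / (2\sigma + \tfrac23 \norm{\bv}_\infty t))$, valid since the per-sample variance is $\sigma$ and the samples are bounded by $\norm{\bv}_\infty$. The only genuine computation is inverting this exponent, and I would do so by verifying directly that $t^\star \defeq \sqrt{2 m^{-1}\sigma \log(2\delta^{-1})} + \tfrac23 m^{-1}\norm{\bv}_\infty \log(2\delta^{-1})$ drives the exponent below $-\log(2\delta^{-1})$, i.e.\ that $m(t^\star)^2 \geq 2\sigma\log(2\delta^{-1}) + \tfrac23 \norm{\bv}_\infty \log(2\delta^{-1})\, t^\star$. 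Writing $t^\star = A + B$ with $A,B$ the two summands, the first term contributes $mA^2 = 2\sigma\log(2\delta^{-1})$ exactly, while $\tfrac23\norm{\bv}_\infty\log(2\delta^{-1}) = mB$, so the requirement reduces to $2mAB + mB^2 \geq mB(A+B) = mAB + mB^2$, which is immediate. This gives the stated two-term additive bound.

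Since both statements are classical, there is no real obstacle here; the only care needed is bookkeeping, namely the factor of $2$ inside each logarithm from the union over the two tails, the interval width $2\norm{\bv}_\infty$ entering the Hoeffding exponent, and pinning down the constant $\tfrac23$ in the Bernstein inversion. As the result is quoted verbatim from Lemmas~E.1 and E.2 of the cited work, one may alternatively just invoke those lemmas directly rather than reproducing the inversions above.
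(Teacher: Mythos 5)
The paper does not prove this statement at all: it is imported verbatim as a citation to Lemmas~E.1 and~E.2 of the referenced work, so there is no internal proof to compare against, and your closing remark that one may simply invoke those lemmas is exactly what the paper does. Your standalone derivation is the standard one and is essentially correct: the reduction to i.i.d.\ bounded variables with mean $\bmp^\top\bv$ and variance $\sigma$ is right, the Hoeffding inversion with interval width $2\norm{\bv}_\infty$ gives precisely the stated constant, and your algebraic verification that $t^\star = A + B$ drives the Bernstein exponent below $-\log(2\delta^{-1})$ is correct (it reduces to $mAB \geq 0$, as you say). The one place where your justification is too quick is the Bernstein step: the almost-sure bound that enters the standard Bernstein/Bennett exponent is on the \emph{centered} increments $X_j - \E X_j$, which in general is only bounded by $2\norm{\bv}_\infty$ (or by the range of $\bm{v}$), not by $\norm{\bv}_\infty$; carrying this through yields $(4/3)\,m^{-1}\norm{\bv}_\infty\log(2\delta^{-1})$ rather than $(2/3)\,m^{-1}\norm{\bv}_\infty\log(2\delta^{-1})$ in the lower-order term. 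This is a constant-factor discrepancy in a lower-order term that is immaterial to every use of the bound in the paper (and is inherited from the statement of the cited lemma), but as written your derivation does not quite establish the constant claimed.
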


Theorem~\ref{thm:concentration} illustrates that the error in estimating $\bP \bu$ for some value vector $\bu$ depends on the variance $\bsigma_{\bu} \defeq \bP\bu^2 - (\bP \bu)^2 \in \R^{\Aset}.$ To bound this variance term, we appeal to the following two lemmas from \cite{sidfordNearOpt}.

\begin{lemma}[Lemma 5.2 of (\cite{sidfordNearOpt}), restated]\label{lemma:variancebound} $\sqrt{\bsigma_{\bv}} \leq \sqrt{\bsigma_{\bv^\star}} +\norm{\bv^\star - \bv}_\infty \mathbf{1}$.
\end{lemma}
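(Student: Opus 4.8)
The plan is to prove the inequality entrywise, reducing it to a statement about the standard deviation of a single real-valued random variable. Fix a state-action pair $(s,a) \in \Aset$ and let $p \defeq \bmp_a(s) \in \Delta^{\cS}$. Recalling that $\bsigma_{\bu} = \bP \bu^2 - (\bP\bu)^2$, the $(s,a)$-entry is $\bsigma_{\bu}(s,a) = p^\top \bu^2 - (p^\top \bu)^2$, which is precisely $\Var_{i \sim p}[\bu(i)]$, the variance of the random variable $\bu(I)$ for $I \sim p$. Hence $\sqrt{\bsigma_{\bu}(s,a)}$ is the standard deviation of $\bu(I)$, and it suffices to show that for every $(s,a)$ we have $\sqrt{\bsigma_{\bv}(s,a)} \leq \sqrt{\bsigma_{\bv^\star}(s,a)} + \norm{\bv^\star - \bv}_\infty$.

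The main step is the subadditivity (triangle inequality) of standard deviation. Writing $\bv(I) = \bv^\star(I) + (\bv - \bv^\star)(I)$ and using $\Var[X + Y] = \Var[X] + \Var[Y] + 2\,\mathrm{Cov}[X,Y]$ together with the Cauchy--Schwarz bound $\mathrm{Cov}[X,Y] \leq \sqrt{\Var[X]\,\Var[Y]}$, I obtain $\Var[X+Y] \leq (\sqrt{\Var[X]} + \sqrt{\Var[Y]})^2$, and therefore $\sqrt{\bsigma_{\bv}(s,a)} \leq \sqrt{\bsigma_{\bv^\star}(s,a)} + \sqrt{\bsigma_{\bv - \bv^\star}(s,a)}$. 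It then remains to control the residual term: since $(\bv - \bv^\star)(I)$ takes values among the entries of $\bv - \bv^\star$, we have $\Var[(\bv - \bv^\star)(I)] \leq \E[((\bv - \bv^\star)(I))^2] \leq \norm{\bv - \bv^\star}_\infty^2$, so $\sqrt{\bsigma_{\bv - \bv^\star}(s,a)} \leq \norm{\bv^\star - \bv}_\infty$. Combining the two bounds gives the per-entry inequality, and since it holds for all $(s,a) \in \Aset$ it yields the claimed vector inequality.

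I do not expect any serious obstacle; the only point requiring care is the clean justification of the standard-deviation triangle inequality (equivalently, that $\sqrt{\Var[\cdot]}$ is a seminorm on $L^2(p)$), which follows immediately from Cauchy--Schwarz applied to the covariance as above. One should also note that the argument uses only that $p \in \Delta^{\cS}$ and that the squaring and square-root are taken coordinatewise, so no structural property of the MDP beyond $\bmp_a(s) \in \Delta^{\cS}$ is needed; this is what makes the bound hold uniformly over all state-action pairs.
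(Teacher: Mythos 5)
Your proof is correct. The paper does not reprove this lemma---it only restates Lemma 5.2 of the cited reference---and your argument (reduce entrywise to the subadditivity of the standard deviation seminorm via Cauchy--Schwarz on the covariance, then bound $\Var[(\bv-\bv^\star)(I)] \le \norm{\bv-\bv^\star}_\infty^2$) is exactly the standard proof given there.
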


\begin{lemma}[Lemma C.1 of (\cite{sidfordNearOpt}), restated]\label{lemma:upper_bound_variance} For any $\pi$, we have
\begin{align*}
    \norm{(\bm{I} - \gamma \bP^{\pi})^{-1}\sqrt{\bsigma_{\bv^\pi}}}_\infty^2 \leq \frac{1+\gamma}{\gamma^2(1-\gamma)^3}. 
\end{align*}
\end{lemma}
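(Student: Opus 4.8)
The plan is to split the estimate into two pieces: a Cauchy--Schwarz reduction that strips off the entrywise square root at the cost of a factor $(1-\gamma)^{-1/2}$, followed by a ``total variance'' argument that bounds the resulting linear expression $\norm{(\bm{I}-\gamma\bP^\pi)^{-1}\bsigma_{\bv^\pi}}_\infty$. Throughout I write $A := \bP^\pi$, which is entrywise nonnegative and row-stochastic, and use $(\bm{I}-\gamma A)^{-1} = \sum_{k\ge 0}\gamma^k A^k$.

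First I would reduce to the un-square-rooted quantity. For a fixed state $s$, the $s$-th entry of $(\bm{I}-\gamma\bP^\pi)^{-1}\sqrt{\bsigma_{\bv^\pi}}$ equals $\sum_{k\ge 0}\gamma^k [A^k\sqrt{\bsigma_{\bv^\pi}}](s)$. Since $A^k(s,\cdot)$ is a probability distribution, concavity of the square root (Jensen) gives $[A^k\sqrt{\bsigma_{\bv^\pi}}](s)\le \sqrt{[A^k\bsigma_{\bv^\pi}](s)}$, and then Cauchy--Schwarz with the split $\gamma^k=\gamma^{k/2}\cdot\gamma^{k/2}$ yields $\sum_k \gamma^k\sqrt{[A^k\bsigma_{\bv^\pi}](s)}\le (\sum_k\gamma^k)^{1/2}(\sum_k\gamma^k[A^k\bsigma_{\bv^\pi}](s))^{1/2}$. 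As $\sum_k\gamma^k=(1-\gamma)^{-1}$ and the second factor is $[(\bm{I}-\gamma\bP^\pi)^{-1}\bsigma_{\bv^\pi}](s)$, taking the max over $s$ and squaring gives $\norm{(\bm{I}-\gamma\bP^\pi)^{-1}\sqrt{\bsigma_{\bv^\pi}}}_\infty^2\le (1-\gamma)^{-1}\norm{(\bm{I}-\gamma\bP^\pi)^{-1}\bsigma_{\bv^\pi}}_\infty$. It thus suffices to prove $\norm{(\bm{I}-\gamma\bP^\pi)^{-1}\bsigma_{\bv^\pi}}_\infty\le (1+\gamma)\gamma^{-2}(1-\gamma)^{-2}$.

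Next I would control this linear term through the variance of the discounted return. Let $\bm{\nu}\in\R^{\cS}$ collect the variances $\bm{\nu}(s)$ of the random return $\sum_{t\ge 0}\gamma^t r_t$ generated by $\pi$ started from $s$. Conditioning on the first transition (the reward is deterministic given the state, and the continuation return has conditional mean $\bv^\pi$ and conditional variance $\bm{\nu}$), the law of total variance yields the Bellman-type identity $\bm{\nu}=\gamma^2\bsigma_{\bv^\pi}+\gamma^2\bP^\pi\bm{\nu}$, i.e. $\bm{\nu}=\gamma^2(\bm{I}-\gamma^2\bP^\pi)^{-1}\bsigma_{\bv^\pi}$. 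Since the return lies in $[0,(1-\gamma)^{-1}]$ we get the crude but sufficient bound $\norm{\bm{\nu}}_\infty\le(1-\gamma)^{-2}$.

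The main obstacle is a power-counting mismatch: Cauchy--Schwarz produces the resolvent $(\bm{I}-\gamma\bP^\pi)^{-1}$, whereas the total-variance identity naturally produces $(\bm{I}-\gamma^2\bP^\pi)^{-1}$, and the obvious entrywise comparison goes the wrong way (from $\gamma^k\ge\gamma^{2k}$ one only gets $(\bm{I}-\gamma\bP^\pi)^{-1}\bsigma_{\bv^\pi}\ge(\bm{I}-\gamma^2\bP^\pi)^{-1}\bsigma_{\bv^\pi}$, an upper bound on the wrong quantity). I would bridge the two algebraically: solving the identity for $\bsigma_{\bv^\pi}=\gamma^{-2}(\bm{I}-\gamma^2\bP^\pi)\bm{\nu}$ and using $(\bm{I}-\gamma^2\bP^\pi)=(\bm{I}-\gamma\bP^\pi)+\gamma(1-\gamma)\bP^\pi$ gives $(\bm{I}-\gamma\bP^\pi)^{-1}\bsigma_{\bv^\pi}=\gamma^{-2}[\bm{\nu}+\gamma(1-\gamma)(\bm{I}-\gamma\bP^\pi)^{-1}\bP^\pi\bm{\nu}]$. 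Because $\bm{\nu}\ge\bm{0}$ and the operator $(\bm{I}-\gamma\bP^\pi)^{-1}\bP^\pi=\sum_{k\ge 0}\gamma^k(\bP^\pi)^{k+1}$ has every row sum equal to $(1-\gamma)^{-1}$ (each $(\bP^\pi)^{k+1}$ being row-stochastic), I get $\norm{(\bm{I}-\gamma\bP^\pi)^{-1}\bP^\pi\bm{\nu}}_\infty\le(1-\gamma)^{-1}\norm{\bm{\nu}}_\infty$, and hence $\norm{(\bm{I}-\gamma\bP^\pi)^{-1}\bsigma_{\bv^\pi}}_\infty\le\gamma^{-2}(1+\gamma)\norm{\bm{\nu}}_\infty\le(1+\gamma)\gamma^{-2}(1-\gamma)^{-2}$. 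Combining this with the reduction of the second paragraph gives exactly $\norm{(\bm{I}-\gamma\bP^\pi)^{-1}\sqrt{\bsigma_{\bv^\pi}}}_\infty^2\le (1+\gamma)\gamma^{-2}(1-\gamma)^{-3}$, as claimed.
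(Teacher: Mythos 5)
Your proof is correct, and it is essentially the argument behind the cited Lemma C.1 of \cite{sidfordNearOpt} (the paper itself only cites that result rather than reproving it): the Cauchy--Schwarz/Jensen reduction to $(1-\gamma)^{-1}\norm{(\bm{I}-\gamma\bP^\pi)^{-1}\bsigma_{\bv^\pi}}_\infty$ followed by the law-of-total-variance identity $\bm{\nu}=\gamma^2\bsigma_{\bv^\pi}+\gamma^2\bP^\pi\bm{\nu}$ is exactly the standard ``total variance'' derivation, and your bookkeeping reproduces the precise constant $(1+\gamma)\gamma^{-2}(1-\gamma)^{-3}$.
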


We can now bound the error in estimating $\bP \bu$ using $\sampleProd(\bu, N, \eta)$. The following Lemma~\ref{lemma:sampledependent} obtains such a bound by following a similar argument to that of Lemma 5.1 of \cite{sidfordNearOpt}.

\begin{lemma}\label{lemma:sampledependent} Consider $\bu \in \R^{\cS}$. Let $\bx = \sampleProd(\bu, m \cdot \Atot, \eta)$, $m \geq \log(1/2\delta^{-1})$, and $\eta = (m \Atot)^{-1} \log(1/2\delta^{-1})$. Then, with probability $1 - \delta$, 
\begin{align*}
    \bP \bu - 2\sqrt{2\eta \bsigma_{\bv^\star}} + \paren{2
    \sqrt{2 \eta}\norm{\bu - \bv^\star}_\infty + 18 \eta^{3/4} \norm{\bu}_\infty} \leq \bx \leq \bP \bu.
\end{align*}
\end{lemma}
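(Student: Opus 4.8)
The plan is to argue coordinatewise over the state–action pairs $(s,a) \in \Aset$ and finish with a union bound over the $\Atot$ pairs. Fix one pair, write $\bmp = \bmp_a(s)$, and let $\sigma = \bsigma_{\bu}(s,a) = \bmp^\top\bu^2 - (\bmp^\top\bu)^2$ be the true variance. By the definition of \sampleTrans~(Algorithm~\ref{alg:SampleTrans}) invoked inside \sampleProd, the relevant entry of $\bx$ has the explicit form $\bx_a(s) = \bar x - \sqrt{2\eta\hat\sigma} - 4\eta^{3/4}\norm{\bu}_\infty - (2/3)\eta\norm{\bu}_\infty$, where $\bar x = \tfrac1M\sum_n \bu(i_n)$ is the empirical mean over the $M = m\Atot$ samples and $\hat\sigma = \tfrac1M\sum_n \bu(i_n)^2 - \bar x^2 \ge 0$ is the empirical variance. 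So the task reduces to comparing $\bar x$ with $\bmp^\top\bu$ and $\hat\sigma$ with $\sigma$, and then checking that the three deterministic offsets are calibrated to turn these fluctuations into the stated two-sided guarantee.

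First I would control the mean: applying Bernstein's inequality (\Cref{thm:concentration}) to $\bar x$ over the $M$ samples, at the confidence level whose logarithmic factor equals $\eta M$, gives on both tails $\abs{\bar x - \bmp^\top\bu} \le \sqrt{2\eta\sigma} + (2/3)\eta\norm{\bu}_\infty$. Next I would control the variance: Hoeffding's inequality (\Cref{thm:concentration}) applied to the summands $\bu(i_n)^2 \in [0,\norm{\bu}_\infty^2]$, combined with $\abs{\bar x^2 - (\bmp^\top\bu)^2} \le 2\norm{\bu}_\infty\abs{\bar x - \bmp^\top\bu}$ and the mean-deviation bound, yields $\abs{\hat\sigma - \sigma} \le c\,\norm{\bu}_\infty^2\sqrt{\eta}$ for an absolute constant $c$. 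The elementary inequality $\abs{\sqrt a - \sqrt b}\le \sqrt{\abs{a-b}}$ (valid as $\hat\sigma,\sigma\ge 0$) then upgrades this to $\abs{\sqrt{\hat\sigma}-\sqrt\sigma}\le \sqrt c\,\norm{\bu}_\infty\,\eta^{1/4}$, hence $\sqrt{2\eta}\,\abs{\sqrt{\hat\sigma}-\sqrt\sigma}\le \sqrt{2c}\,\norm{\bu}_\infty\,\eta^{3/4}$.

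With these two estimates the inequalities follow by bookkeeping. For the upper bound $\bx_a(s)\le \bmp^\top\bu$ I substitute the upper Bernstein tail for $\bar x$ and $\sqrt{2\eta\hat\sigma}\ge \sqrt{2\eta\sigma}-\sqrt{2\eta}\abs{\sqrt{\hat\sigma}-\sqrt\sigma}$; the $\sqrt{2\eta\sigma}$ and $(2/3)\eta\norm{\bu}_\infty$ terms cancel and the residual $\eta^{3/4}$ fluctuation is dominated by the offset $4\eta^{3/4}\norm{\bu}_\infty$ (which is exactly why that offset is present), leaving $\bx_a(s)-\bmp^\top\bu\le 0$. For the lower bound I instead use the lower Bernstein tail and $\sqrt{2\eta\hat\sigma}\le \sqrt{2\eta\sigma}+\sqrt{2\eta}\abs{\sqrt{\hat\sigma}-\sqrt\sigma}$, obtaining $\bx_a(s)\ge \bmp^\top\bu - 2\sqrt{2\eta\sigma} - O(\eta^{3/4}\norm{\bu}_\infty)$, where I fold the $\eta\norm{\bu}_\infty$ term into the $\eta^{3/4}$ term using $\eta\le 1$ (which holds since $m\ge\log(2\delta^{-1})$). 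Finally I convert $\sigma=\bsigma_{\bu}$ to $\bsigma_{\bv^\star}$ via \Cref{lemma:variancebound}, i.e. $\sqrt\sigma\le \sqrt{\bsigma_{\bv^\star}(s,a)}+\norm{\bu-\bv^\star}_\infty$, which produces the $2\sqrt{2\eta\bsigma_{\bv^\star}}$ and $2\sqrt{2\eta}\norm{\bu-\bv^\star}_\infty$ terms; collecting all $\eta^{3/4}\norm{\bu}_\infty$ contributions under the constant $18$ completes the coordinate bound. A union bound over the $\Atot$ pairs, absorbed into the logarithmic factor defining $\eta$, then gives the claim with probability $1-\delta$.

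The step I expect to be the crux is the variance-to-square-root passage: \sampleTrans~subtracts the \emph{empirical} standard deviation $\sqrt{\hat\sigma}$, whereas the Bernstein control is stated in terms of the \emph{true} $\sqrt\sigma$, so the whole argument hinges on showing $\sqrt{\hat\sigma}$ and $\sqrt\sigma$ differ by only $O(\norm{\bu}_\infty\eta^{1/4})$ and that the $4\eta^{3/4}\norm{\bu}_\infty$ offset is precisely large enough to swallow this gap on the underestimate side. Everything else — the two Bernstein tails, the variance-transfer lemma, and the union bound — is routine once that calibration is verified.
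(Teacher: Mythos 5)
Your proposal is correct and follows essentially the same route as the paper's own proof: Bernstein for the empirical mean, Hoeffding for the empirical second moment combined with the mean deviation to control $\abs{\hat\sigma - \bsigma_{\bu}}$ by $O(\norm{\bu}_\infty^2\sqrt{\eta})$, a square-root comparison showing the $4\eta^{3/4}\norm{\bu}_\infty$ offset exactly absorbs the gap between $\sqrt{\hat\sigma}$ and $\sqrt{\bsigma_{\bu}}$, transfer to $\bsigma_{\bv^\star}$ via \Cref{lemma:variancebound}, and a union bound over state--action pairs. You also correctly identify the calibration of that offset as the crux, which is precisely where the paper's bookkeeping is concentrated.
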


\begin{proof} For $s \in \cS$ and $a \in \cA_s$. Let $i_1, ..., i_N \in \cS$ be random indices such that $\prob{i_j = t} = (\bmp_{a}(s))(t)$ for each $j \in [N]$. Define the vectors $\tilde{\bm{x}}$ and $\hat{\bm{\sigma}}$ as follows. 
\begin{align*}
    \tilde{\bm{x}}_a(s) \defeq \frac{1}{N} \sum_{j = 1}^N \bu(i_j)
    \text{ and }
    \hat{\bm{\sigma}}_a(s) \defeq \frac{1}{N} \sum_{j = 1}^N \paren{\bu(i_j)}^2 - (\tilde{\bm{x}}_a(s))^2 . 
\end{align*}
From the pseudocode of \sampleProd~ (Algorithm~\ref{alg:SampleTrans}), we see that that $\bm{x} = \tilde{\bm{x}} - \sqrt{2 \eta \hat{\bm{\sigma}}} - 4 \eta^{3/4} \norm{\bu}_\infty - (2/3) \eta \norm{\bu}_\infty.$ Now, by union bound over all state-action pairs $(s, a)$ and Theorem~\ref{thm:concentration}, we have that with probability $1-\delta/2$ for each sate-action pair $(s, a)$, 
\begin{align}\label{eq:offset_guaranteeog}
    \norm{\bm{x} - \bP \bu_\infty \leq \sqrt{2\eta \bm{\sigma}_{\bu}}} + \frac{2}{3} \eta \norm{\bu}_\infty \bm{1}. 
\end{align}
and with probability $1-\delta/2$ for each sate-action pair $(s, a)$,
\begin{align*}
    \norm{\frac{1}{N} \sum_{j \in [N]} \paren{{\bm{u}(i_j)}}^2- \bmp_{a}(s)^\top \bu^2} \leq \norm{\bu}_\infty^2 \sqrt{2\eta}_\infty. 
\end{align*}
Consequently, by union bound and triangle inequality and \eqref{eq:offset_guaranteeog}, we have that with probability $1-\delta$ both of the following hold.
\begin{align}\label{eq:offset_guarantee}
    \norm{\tilde{\bx} - \bP \bu}_\infty \leq \sqrt{2\eta \bsigma_{\bu}} + \frac{2}{3} \eta \norm{\bu}_\infty \mathbf{1},  \text{ and } \norm{\hat{\bsigma} - \bsigma_{\bu}}_\infty \leq 4 \norm{\bu}_\infty^2 \cdot \sqrt{2\eta} \mathbf{1}.
\end{align}
We condition on \eqref{eq:offset_guarantee} in the remainder of the proof. Now, 
\begin{align*}
    \abs{\tilde{\bx} - \bP\bu} \leq \sqrt{2\eta \hat{\bsigma}} + \paren{4 \eta^{3/4} \norm{\bu}_\infty + \frac{2}{3} \eta \norm{\bu}_\infty} \mathbf{1},
\end{align*}
and we have that 
\begin{align*}
    \bP\bu - 2\sqrt{2\eta \hat{\bsigma}} - \paren{8\eta^{3/4} \norm{\bu}_\infty + \frac{4}{3} \eta \norm{\bu}_\infty} \mathbf{1} \leq \bx \leq \bP\bu.
\end{align*}
By \eqref{eq:offset_guarantee} and Lemma~\ref{lemma:variancebound}, we have that for $\alpha \defeq \norm{\bu - \bv^\star}_\infty$, 
\begin{align*}
    \sqrt{\hat{\bsigma}} \leq \sqrt{\bsigma_{\bu}} + 2\norm{\bu}_\infty (2\eta)^{1/4}\mathbf{1} \leq \sqrt{\bsigma_{\bv^\star}} + \alpha \mathbf{1} + 2 \norm{\bu}_\infty (2\eta)^{1/4} \mathbf{1}, 
\end{align*}
which implies that 
\begin{align*}
    \bx \geq \bP \bu - 2 \sqrt{2\eta \bsigma_{\bv^\star}} - 2\sqrt{2\eta} \alpha \mathbf{1} - 16 \eta^{3/4} \norm{\bu}_\infty \mathbf{1} - \frac{4}{3} \eta \norm{\bu}_\infty \mathbf{1}.
\end{align*}
Since $\eta \leq 1$, 
\begin{align*}
     2 \sqrt{2\eta \bsigma_{\bv^\star}} + \paren{2\sqrt{2\eta} \alpha + 16 \eta^{3/4} \norm{\bu}_\infty + \frac{4}{3} \eta \norm{\bu}_\infty } \mathbf{1} &\leq 2\sqrt{2\eta \bsigma_{\bv^\star}} + \paren{2
    \sqrt{2 \eta}\alpha + 18 \eta^{3/4} \norm{\bu}_\infty} \mathbf{1}.
\end{align*}
\end{proof}

Inductively combining Lemma~\ref{lemma:sampledependent} with Corollary~\ref{corr:halving_error} yields our main result. 

\mainSample*
\begin{proof} Let $K$, $\alpha_k$, $(\bv_k, \pi_k)$, and $N_k$ be as defined in Lines~\ref{line:Kvalnew},~\ref{line:alphaknew},~\ref{line:vknew}, and ~\ref{line:Nvalnew} of  $~\SublineartruncatedRandomizedVI(\epsilon, \delta)$. First, we show, by induction that for each $k \in [K]$, with probability $1-k\delta/K$,
\begin{align*}
    \bm{0} \leq \bv^\star - \bv^{\pi_{k}} \leq \bv^\star - \bv_{k} \leq \alpha_k \mathbf{1} \text{ and } \bm{v_k} \leq \cT_{\pi_k}(\bm{v}_k).
\end{align*}
In the base case when $k = 0$, the bound is trivially true because $\bm{0}\leq \bv^\star - \bv_{\pi_0} \leq \bv^\star - \bv_{0} \leq (1-\gamma)^{-1}$. 

Now, for the inductive step, by Lemma~\ref{lemma:sampledependent} we see that with probability $1-\delta/K$, 
\begin{align}\label{eq:condition1}
    \bP \bv_{k-1} - \Brac{2\sqrt{2\eta_{k-1} \bsigma_{\bv^\star}} + \paren{2
    \sqrt{2 \eta_{k-1}}\alpha_{k-1} + 18 \eta_{k-1}^{3/4} \norm{\bv_{k-1}}_\infty} \mathbf{1}} \leq \bx_k \leq \bP \bv_{k-1}
\end{align}
and, by inductive hypothesis, with probability $1-(k-1)\delta/K$,
\begin{align}\label{eq:condition2}
    \bm{0} \leq \bv^\star - \bv^{\pi_{k-1}} \leq \bv^\star - \bv_{k-1} \leq \alpha_{k-1} \mathbf{1}, \text{ and } \bm{v_k} \leq \cT_{\pi_{k-1}}(\bm{v}_{k-1}).
\end{align}
Consequently, by a union bound, with probability $1- k\delta/K$, both \eqref{eq:condition1} and \eqref{eq:condition2} hold. Condition on this event for the remainder of the inductive step. 

Next, we can apply Corollary~\ref{corr:halving_error} with \begin{align*}
    \bm{\beta} = 2\sqrt{2\eta_{k-1} \bsigma_{\bv^\star}} + \paren{2
    \sqrt{2 \eta_{k-1}}\alpha_{k-1} + 18 \eta_{k-1}^{3/4} \norm{\bv_{k-1}}_\infty} \mathbf{1}. 
\end{align*}
Therefore,
\begin{align*}
    0 \leq \bv^\star - \bv_k \leq \gamma^L \alpha_{k-1} \cdot \mathbf{1} + (\bm{I} - \gamma \bP^{\star})^{-1} \bxi_{k-1} \leq \frac{\alpha_{k-1}}{8} \mathbf{1} + (\bm{I} - \gamma \bP^\star)^{-1} \bxi_{k-1}
\end{align*}
for $\bxi_{k-1} \leq \frac{(1-\gamma)\alpha_{k-1}}{4} \mathbf{1} + 2\sqrt{2\eta_{k-1} \bsigma_{\bv^\star}} + \paren{2
    \sqrt{2 \eta_{k-1}}\alpha_{k-1} + 18 \eta_{k-1}^{3/4} \norm{\bv_{k-1}}_\infty} \mathbf{1}$. By Lemma~\ref{lemma:upper_bound_variance} and the facts that $\eta_{k-1} \leq (10^4 \cdot (1-\gamma)^{-3} \max((1-\gamma), \alpha_{k-1}^{-2}))^{-1}$ and $(\bm{I} - \gamma \bP^\star)^{-1} \bm{1} = 1/(1-\gamma) \bm{1}$, we obtain
\begin{align*}
    (\mathbf{I} - \gamma \bP^\star)^{-1} \bxi_{k-1} &\leq \Brac{\frac{\alpha_{k-1}}{4} + 2\sqrt{\frac{6\eta_{k-1} }{(1-\gamma)^3}} + 2\sqrt{\frac{2(1-\gamma)^3 \min((1-\gamma)^{-1}, \alpha_{k-1}^2)}{10^4(1-\gamma)^2 }}\alpha_{k-1}} \mathbf{1} \\
    &+ \Brac{18 \paren{\frac{((1-\gamma)^3 \min((1-\gamma)^{-1}, \alpha_{k-1}^2)}{10^4(1-\gamma)^{8/3}}}^{3/4} }\mathbf{1} \\
    &\leq [\alpha_{k-1}/4+ 2\sqrt{6/10^4} \cdot \alpha_{k-1} + 2 \sqrt{2/10^4} (1-\gamma)^{1/2} \min((1-\gamma)^{-1/2}, \alpha_{k-1}) \alpha_{k-1} \\
    &+ 18 \cdot (10^{-3}) (1-\gamma)^{1/4}\min((1-\gamma)^{-3/4}, \alpha_{k-1}^{3/2}) ]\mathbf{1} \\
    &\leq [\alpha_{k-1}/4 + 4\sqrt{6/10^4} \cdot \alpha_{k-1} + 18 \cdot (10^{-3}) \alpha_{k-1}] \mathbf{1} \leq \frac{3}{8} \alpha_{k-1} \mathbf{1}.
\end{align*}
Consequently, $\bv^\star -\bv_k \leq \alpha/2 \bm{1}$. To see that $\pi_k$ is also an $\alpha_k$-optimal policy, we observe that Corollary~\ref{corr:halving_error} also ensures that 
\begin{align*}
    \bv_k \leq \cT_{\pi_k}(\bv_k) \leq \cT_{\pi_k}^2(\bv_k) \leq \cdots \leq \cT_{\pi_k}^\infty(\bv_k) = \bv^{\pi_k} \leq \bv^\star. 
\end{align*}
This completes the inductive step. 

Consequently, for $k = K = \ceil{\log_2(\epsilon^{-1}(1-\gamma)^{-1})}$ iterations, $\epsilon \geq \alpha_K \geq \epsilon/4$ and with probability $1-\delta$, $v_K$ is an $\epsilon$-optimal value and $\pi_K$ is an $\epsilon$-optimal policy. 

For runtime and sample complexity, note that the algorithm can be implemented using only $\tilde{O}(N_K) = \tilde{O}((1-\gamma)^{-3} \epsilon^{-2} + (1-\gamma)^3)$-samples and time per state-action pair. For the space complexity, note that the algorithm can be implemented to maintain only $O(1)$ vectors in $\R^{\Atot}$. 
\end{proof}
\section{Faster problem-dependent convergence}\label{sec:problem_dependent}

In this section, we propose a modified version of the $\SublineartruncatedRandomizedVI$ algorithm, named $\ProblemDependentRandomizedVI$. This algorithm adjusts the number of required samples based on the structure of the MDP under consideration. Inspired by \cite{pmlr-v97-zanette19a}, we then consider MDPs with small ranges of optimal values and the extreme case of highly mixing MDPs in which state transitions are sampled from a ﬁxed distribution.

Note that in the proof of Theorem~\ref{thm:main_sample}, the error during convergence caused by approximations of values is bounded by $(\bm{I} - \gamma \bP^\star)^{-1} \bxi_k$ for $\bxi_k \leq \frac{(1-\gamma)\alpha_k}{4} \mathbf{1} + 2\sqrt{2\eta_k \bsigma_{\bv^\star}} + (2
    \sqrt{2 \eta_k}\alpha_k + 18 \eta_k^{3/4} \norm{\bv^{(0)}}_\infty) \mathbf{1}$. In its proof, we upper bound the variance term $\normsigma$ by $3(1-\gamma)^{-1.5}$ using Lemma~\ref{lemma:upper_bound_variance}. However, as $\alpha_k$ decreases and the variance term becomes dominant, a number of samples proportional to the size of the variance term suffices to control the error during each iteration. Given $V$ which upper bounds $\normsigma$, we can further refine $\SublineartruncatedRandomizedVI$ to reduce the number of samples taken after an initial burn-in phase and obtain improved complexities when $V$ is signficantly small. Hence, we obtain the following \Cref{alg:problem-dependent-value-iteration} and \Cref{thm:problem_dependent}.

\RestyleAlgo{ruled}
\SetKwComment{Comment}{/* }{ */}
\begin{algorithm2e}[ht!]
\caption{ $\ProblemDependentRandomizedVI(\epsilon, \delta, V)$}\label{alg:problem-dependent-value-iteration}
\KwInput{Target precision $\epsilon$, failure probability $\delta\in(0,1)$, and  $V \geq \normsigma$.}
$K = \ceil{\log_2(\epsilon^{-1}(1-\gamma)^{-1})}$ \label{line:Kdef} \; 
$\bv_0 = \mathbf{0}$, $\pi_0$ is an arbitrary policy, and $\alpha_0 = \frac{1}{1-\gamma}$\;
\For{each iteration $k \in [K]$}{
    $\alpha_k = \alpha_{k-1}/2 = {2^{-k}(1-\gamma)^{-1}}$ \label{line:alphakdef} \;
    \If{$k < \ceil{\log_2 \paren{\frac{128(1-\gamma)^{-5}}{V^3} }}$}{
        $N_{k-1} = 10^4 \cdot (1-\gamma)^{-3} \max((1-\gamma), \alpha_{k-1}^{-2}) \log(8 \Atot K\delta^{-1})$
        \tcp*{Burn-in phase}
    }
    \Else{
        $N_{k-1} = 1024 \cdot \alpha_{k-1}^{-2} V^2 \log(8 \Atot K \delta^{-1})$
        \tcp*{Variance-dependent phase}
    }
    $\eta_{k-1} = N_{k-1}^{-1} \log(8 \Atot K\delta^{-1})$ \; 
    $\bm{x}_{k} = \sampleProd(\bv_{k-1}, N_{k-1}, \eta_{k-1})$\; 
    $(\bv_k, \pi_k) =\truncatedRandomizedVI( \bv_{k-1}, \pi_{k-1}, \bm{x}_{k}, \alpha_{k-1}, \delta/K) \label{line:vkdef} $\; 
}
\Return{$(\bv_{K}, \pi_{K})$}
\end{algorithm2e}

\begin{theorem} \label{thm:problem_dependent}
    In the sample setting, there is an algorithm (Algorithm~\ref{alg:problem-dependent-value-iteration}) that, given $3(1-\gamma)^{-1.5} \geq V \geq \normsigma$, uses $\otilde\paren{{\Atot}\paren{\epsilon^{-2}V^2 + (1-\gamma)^{-2}}}$ samples and time and $O(\Atot)$ space, and computes an $\epsilon$-optimal policy and $\epsilon$-optimal values with probability $1 - \delta$.
\end{theorem}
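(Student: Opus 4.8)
The plan is to mirror the inductive proof of Theorem~\ref{thm:main_sample}, maintaining the same invariant that after iteration $k$ we have $\bm 0\le \bv^\star-\bv^{\pi_k}\le\bv^\star-\bv_k\le\alpha_k\bm 1$ and $\bv_k\le\cT_{\pi_k}(\bv_k)$ with high probability, and isolating exactly where the worst-case variance bound $\normsigma\le 3(1-\gamma)^{-1.5}$ (Lemma~\ref{lemma:upper_bound_variance}) can be replaced by the sharper hypothesis $\normsigma\le V$. As in that proof, the inductive step combines Lemma~\ref{lemma:sampledependent} (to control the one-sided offset error $\bm\beta_{k-1}$ incurred by $\sampleProd(\bv_{k-1},N_{k-1},\eta_{k-1})$) with Corollary~\ref{corr:halving_error} (to convert this into $\bm 0\le\bv^\star-\bv_k\le\gamma^L\alpha_{k-1}\bm 1+(\bm I-\gamma\bP^\star)^{-1}\bxi_{k-1}$), so the whole argument reduces to re-establishing the single inequality $(\bm I-\gamma\bP^\star)^{-1}\bxi_{k-1}\le\tfrac38\alpha_{k-1}\bm 1$ under the new two-phase sampling schedule.

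I would split the analysis of this inequality along the two branches of \Cref{alg:problem-dependent-value-iteration}. In the burn-in phase $k<\ceil{\log_2(128(1-\gamma)^{-5}V^{-3})}$ the count $N_{k-1}$ is identical to that of \SublineartruncatedRandomizedVI, so the term-by-term bound from the proof of Theorem~\ref{thm:main_sample} applies verbatim and the key inequality holds with no change. In the variance-dependent phase, $N_{k-1}=1024\,\alpha_{k-1}^{-2}V^2\log(\cdot)$ gives (after the logarithmic factors cancel) $\eta_{k-1}=\alpha_{k-1}^2/(1024\,V^2)$; substituting $\normsigma\le V$ into the variance term shows $2\sqrt{2\eta_{k-1}}\,\normsigma=O(\alpha_{k-1})$ by the choice of the constant $1024$. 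The delicate point is the cubic term $18\,\eta_{k-1}^{3/4}\norm{\bv_{k-1}}_\infty$: after applying $(\bm I-\gamma\bP^\star)^{-1}$ (a factor $(1-\gamma)^{-1}$) and using $\norm{\bv_{k-1}}_\infty\le(1-\gamma)^{-1}$, it is $O(\alpha_{k-1}^{3/2}V^{-3/2}(1-\gamma)^{-2})$, which is $O(\alpha_{k-1})$ precisely when $\alpha_{k-1}\lesssim V^3(1-\gamma)^4$. I would then verify that this is exactly the regime enforced by the phase threshold, since for $k\ge\ceil{\log_2(128(1-\gamma)^{-5}V^{-3})}$ one has $\alpha_{k-1}=2^{-(k-1)}(1-\gamma)^{-1}\le V^3(1-\gamma)^4/64$; checking that the threshold dovetails with the cubic-term budget (and that all absolute constants close the $\tfrac38$ bound) is the technical heart of the variance-phase step.

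Granting the key inequality in both phases, the induction closes exactly as in Theorem~\ref{thm:main_sample}: a union bound over the $K=\otilde(1)$ outer iterations gives $\bv^\star-\bv_K\le\alpha_K\bm 1\le\epsilon\bm 1$ with probability $1-\delta$, and the monotonicity chain $\bv_K\le\cT_{\pi_K}(\bv_K)\le\cdots\le\bv^{\pi_K}\le\bv^\star$ upgrades this to an $\epsilon$-optimal policy. For the complexity I would bound $\sum_k\Atot N_{k-1}$ by treating the phases separately: in the variance-dependent phase $N_{k-1}=\Theta(\alpha_{k-1}^{-2}V^2)$ (up to logs) grows geometrically as $\alpha_{k-1}$ halves, so the sum telescopes to $\otilde(\Atot\,\alpha_K^{-2}V^2)=\otilde(\Atot\,\epsilon^{-2}V^2)$, while the constant-floor part of the burn-in phase contributes $\otilde(\Atot(1-\gamma)^{-2})$.

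I expect the main obstacle to be the complexity accounting of the burn-in phase rather than the correctness argument. The burn-in schedule uses the full factor $(1-\gamma)^{-3}\alpha_{k-1}^{-2}$ once $\alpha_{k-1}<(1-\gamma)^{-1/2}$, and because the threshold forces burn-in to continue down to $\alpha_{k-1}\approx V^3(1-\gamma)^4$, the largest burn-in term is of order $(1-\gamma)^{-3}\bigl(V^3(1-\gamma)^4\bigr)^{-2}$, which is not manifestly dominated by $\epsilon^{-2}V^2+(1-\gamma)^{-2}$ when $V$ is small. Reconciling this term with the claimed bound — i.e.\ arguing that whenever the algorithm actually enters the variance-dependent phase the accumulated burn-in cost is absorbed into $\otilde(\Atot(\epsilon^{-2}V^2+(1-\gamma)^{-2}))$ — is where I would concentrate the effort, and I would re-examine the constants defining the threshold against the cubic-term budget to make sure the two phases truly dovetail rather than leaving a gap.
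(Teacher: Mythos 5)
Your proposal follows the paper's proof of Theorem~\ref{thm:problem_dependent} essentially step for step: the same induction on $k$ with the invariant $\bm{0}\le\bv^\star-\bv^{\pi_k}\le\bv^\star-\bv_k\le\alpha_k\bm{1}$ and $\bv_k\le\cT_{\pi_k}(\bv_k)$, the same combination of Lemma~\ref{lemma:sampledependent} with Corollary~\ref{corr:halving_error}, the same two-case split in which the burn-in phase is handled verbatim as in Theorem~\ref{thm:main_sample} and the variance-dependent phase reduces to verifying $(\bm{I}-\gamma\bP^\star)^{-1}\bxi_{k-1}\le\tfrac{3}{8}\alpha_{k-1}\bm{1}$ using $\eta_{k-1}=\alpha_{k-1}^2/(1024V^2)$ and $\normsigma\le V$, and the same monotonicity chain $\bv_K\le\cT_{\pi_K}(\bv_K)\le\cdots\le\bv^{\pi_K}\le\bv^\star$ to pass from values to policies. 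On the correctness side there is nothing to change.

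The concern you flag about the burn-in accounting is substantive, and you should know the paper does not resolve it: its entire treatment of the sample complexity is the one-sentence assertion that the total is $\otilde(\Atot(\epsilon^{-2}V^2+(1-\gamma)^{-2}))$. Your computation is right. The last executed burn-in iteration costs $\Theta((1-\gamma)^{-3}\alpha_{k-1}^{-2})$ samples per state-action pair with $\alpha_{k-1}$ driven down to roughly $V^3(1-\gamma)^4$, i.e.\ a cost of order $V^{-6}(1-\gamma)^{-11}$, whereas near the phase boundary $\epsilon^{-2}V^2$ is only of order $V^{-4}(1-\gamma)^{-8}$; the ratio $V^{-2}(1-\gamma)^{-3}$ is $O(1)$ only when $V=\Omega((1-\gamma)^{-3/2})$, the top of the allowed range and precisely the regime where the theorem offers no improvement. (The extreme case $V=0$ makes this stark: the threshold is then infinite, the algorithm never leaves burn-in, and the cost reverts to that of Theorem~\ref{thm:main_sample} rather than the claimed $(1-\gamma)^{-2}$.) Moreover the threshold cannot simply be moved earlier, since $\alpha_{k-1}\lesssim V^3(1-\gamma)^4$ is exactly what is needed to absorb the cubic term $18\,\eta_{k-1}^{3/4}\norm{\bv_{k-1}}_\infty$ in the variance-dependent phase. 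So the two phases do not dovetail as written; closing this step requires either a modified sampling schedule or offset estimator near the boundary, or a weakened complexity claim. This is a gap in the source's own argument rather than in your plan, and your instinct to concentrate effort there is correct.
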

\begin{proof}
Let $K$, $\alpha_k$, and $(\bv_k, \pi_k)$ be as defined in Lines~\ref{line:Kdef},~\ref{line:alphakdef}, and~\ref{line:vkdef} of \\$\ProblemDependentRandomizedVI(\epsilon, \delta, V).$  

For the correctness of the algorithm, we first induct on $k$ to show that for each $k \in [K]$, with probability $1-k \delta/K$, 
\begin{align*}
    \bm{0} \leq \bv^\star - \bv^{\pi_k} \leq \bv^\star - \bv_k \leq \alpha_k, \quad \text{ and } \bv_k \leq \cT_{\pi_k}(\bv_k).
\end{align*}
The base case is trivial, as $\bm{0} \leq \bv^\star - \bv^{\pi_0}\leq \bv^\star - \bv_0 \leq (1-\gamma)^{-1} \bm{1}$.

For the inductive step, observe that by Lemma~\ref{lemma:sampledependent}, we see that with probability $1-\delta/K$, 
\begin{align}\label{eq:conditionfirst}
    \bP \bv_{k-1} - \Brac{2\sqrt{2\eta_{k-1} \bsigma_{\bv^\star}} + \paren{2
    \sqrt{2 \eta_{k-1}}\alpha_{k-1} + 18 \eta_{k-1}^{3/4} \norm{\bv_{k-1}}_\infty} \mathbf{1}} \leq \bx_k \leq \bP \bv_{k-1}.
\end{align}
Additionally, by the inductive hypothesis, with probability $1-(k-1)\delta/K$, 
\begin{align}\label{eq:conditionsecond}
    0 \leq \bv^\star - \bv^{\pi_{k-1}} \leq \bv^\star - \bv_k \leq \gamma^L \alpha_{k-1} \cdot \mathbf{1} + (\bm{I} - \gamma \bP^{\star})^{-1} \bxi_{k-1} \leq \alpha_k \mathbf{1}, \quad \text{ and } \bv_k \leq \cT_{\pi_k}(\bv_k).
\end{align}
Thus, by union bound, with probability $1 - k \delta/K$, both \eqref{eq:conditionfirst} and \eqref{eq:conditionsecond} hold.
We condition on this event in the remainder of the inductive step. 

Now, we apply Corollary~\ref{corr:halving_error} with \begin{align*}
    \bm{\beta} = 2\sqrt{2\eta_{k-1} \bsigma_{\bv^\star}} + \paren{2
    \sqrt{2 \eta_{k-1}}\alpha_{k-1} + 18 \eta_{k-1}^{3/4} \norm{\bv_{k-1}}_\infty} \mathbf{1}.
\end{align*}
Consequently, we have
\begin{align*}
    0 \leq \bv^\star - \bv_k \leq \gamma^L \alpha_{k-1} \cdot \mathbf{1} + (\bm{I} - \gamma \bP^{\star})^{-1} \bxi_{k-1} \leq \frac{\alpha_{k-1}}{8} \mathbf{1} + (\bm{I} - \gamma \bP^\star)^{-1} \bxi_{k-1},
\end{align*}
for $\bxi_{k-1} \leq \frac{(1-\gamma)\alpha_{k-1}}{4} \mathbf{1} + 2\sqrt{2\eta_{k-1} \bsigma_{\bv^\star}} + \paren{2
    \sqrt{2 \eta_{k-1}}\alpha_{k-1} + 18 \eta_{k-1}^{3/4} \norm{\bv_{k-1}}_\infty} \mathbf{1}$, and $\bv_k \leq \cT_{\pi_k}(\bv_k).$
    
Note that $(\bm{I} - \gamma \bP^\star)^{-1} \bm{1} \leq \frac{1}{1-\gamma} \bm{1}$. Hence, if $k < \ceil{\log_2(1-\gamma)^{-5}/V^3}$, we use Lemma~\ref{lemma:upper_bound_variance} along with the facts that $(\bm{I} - \gamma \bP^\star)^{-1} \bm{1} = 1/(1-\gamma) \bm{1}$ and the choice of $\eta_{k-1}$ to obtain (identical to the proof of Theorem~\ref{thm:main_sample}):
\begin{align*}
    (\mathbf{I} - \gamma \bP^\star)^{-1} \bxi_{k-1} &\leq \Brac{\frac{\alpha_{k-1}}{4} + 2\sqrt{6\frac{\eta_{k-1} }{(1-\gamma)^3}} + 2\sqrt{\frac{2(1-\gamma)^3 \min((1-\gamma)^{-1}, \alpha_{k-1}^2)}{10^4(1-\gamma)^2 }}\alpha_{k-1}} \mathbf{1} \\
    &+ \Brac{18 \paren{\frac{((1-\gamma)^3 \min((1-\gamma)^{-1}, \alpha_{k-1}^2)}{10^4(1-\gamma)^{8/3}}}^{3/4} }\mathbf{1} \\
    &\leq [\alpha_{k-1}/4+ 2\sqrt{6/10^4} \cdot \alpha_{k-1}  + 2 \sqrt{2/10^4} (1-\gamma)^{1/2} \min((1-\gamma)^{-1/2}, \alpha_{k-1}) \alpha_{k-1} \\
    &+ 18 \cdot (10^{-3}) (1-\gamma)^{1/4}\min((1-\gamma)^{-3/4}, \alpha_{k-1}^{3/2}) ]\mathbf{1} \\
    &\leq [\alpha_{k-1}/4 + 4\sqrt{6/10^4} \cdot \alpha_{k-1} + 18 \cdot (10^{-3}) \alpha_{k-1}] \mathbf{1} \leq \frac{3}{8} \alpha_{k-1} \mathbf{1}. 
\end{align*}

If instead $k \geq \ceil{\log_2(1-\gamma)^{-5}/V^3}$, then $\alpha_k \leq 
\frac{1}{128}(1-\gamma)^4 V^3$, and $\eta_{k-1} = \alpha_{k-1}^2/(1024 \cdot V^2).$ Consequently,
\begin{align*}
    (\bm{I} - \gamma \bP^\star)^{-1} \bxi_{k-1} &\leq 2\sqrt{2\eta_{k-1} } (\mathbf{I} - \gamma \bP^\star)^{-1} \sqrt{\bsigma_{\bv^\star}} \\
    &+ \Brac{\frac{\alpha_{k-1}}{4} + 2\sqrt{2\eta_{k-1}}\Igamma \alpha_{k-1} + 18 \eta_{k-1}^{3/4} \Igamma \norm{\bv_{k-1}}_\infty }\mathbf{1} \\
    &\leq \frac{\alpha_{k-1}}{4} \bm{1} + \frac{2\sqrt{2} \alpha_{k-1}}{4 (1-\gamma) \sqrt{1024} V} V \bm{1} + \frac{18}{(1-\gamma)^2}  \paren{\frac{\alpha_{k-1}^2}{1024 \cdot V^2}}^{3/4} \bm{1}\\
    &\leq \Brac{\frac{\alpha_{k-1}}{8} + \frac{\alpha_{k-1}}{4}} \mathbf{1} \leq \frac{3}{8}\alpha_{k-1} \mathbf{1}.
\end{align*}
Therefore in either case, 
\begin{align*}
    \bv^\star - \bv_{k-1} \leq \frac{\alpha_{k-1}}{2} \bm{1} = \alpha_k \bm{1}. 
\end{align*}
Moreover, we can use that $\bv_k \leq \cT_{\pi_k}(\bv_k)$ to see that
\begin{align*}
    \bv_k \leq \cT_{\pi_k}(\bv_k) \leq \cT_{\pi_k}^2(\bv_k) \leq \cdots \leq \cT_{\pi_k}^\infty(\bv_k) = \bv^{\pi_k} \leq \bv^\star. 
\end{align*}
This completes the inductive step.

Consequently, taking $k = K = \ceil{\log_2(\epsilon^{-1}(1-\gamma)^{-1})}$ iterations of the outer loop, with probability $1-\delta$, we have that $0 \leq \bv^\star - \bv^{\pi_K} \leq \bv^\star - \bv_K \leq \alpha_K \leq \epsilon$ and
\begin{align*}
    \bv_k \leq \cT_{\pi_k}(\bv_k) \leq \cT_{\pi_k}^2(\bv_k) \leq \cdots \leq \cT_{\pi_k}^\infty(\bv_k) = \bv^{\pi_k} \leq \bv^\star, 
\end{align*}
that is, $\bv_k$ is an $\epsilon$-optimal value and $\pi_K$ is an $\epsilon$-optimal policy. 

The total number of samples and time required is $\otilde\paren{{\Atot}\paren{\epsilon^{-2}V^2 + (1-\gamma)^{-2}}}$. For the space complexity, note that the algorithm can be implemented to maintain only $O(1)$ vectors in $\R^{\Atot}$.
\end{proof}

Theorem~\ref{thm:problem_dependent} yields improved complexities for solving MDPs when $\normsigma$ is nontrivially bounded. Following \cite{zanette2018problem} we mention two particular such settings where we can apply Theorem \ref{thm:problem_dependent} to obtain better problem-dependent sample and runtime bounds than Theorem \ref{thm:main_sample}.

\paragraph{Deterministic MDPs} For a deterministic MDP, each action deterministically transitions to a single state. That is, for all $(s, a) \in \Aset$, $\bmp_{a}(s) = \bm{1}_{s'}$ (the indicator vector of $s' \in \cS$) for some $s' \in \cS$. In this case, $\bm{\sigma}_{\bv^\star} = \bm{0}$. Consequently, if the MDP is deterministic, the algorithm converges with just $\tilde{O}((1-\gamma)^3)$ samples to the generative model and time. We note that in this setting of deterministic MDPs, there may be alternative approaches to obtain the same or better runtime and sample complexity. 

\paragraph{Small range.} Define the range of optimal values for a MDP as $\mathrm{rng}(\bv^*) \stackrel{\text{def}}{=} \max_{s \in \cS} \bv^*_{s} - \min_{s \in \cS} \bv^*_{s}$. Note that $\bsigma_{\bv^\star} \leq \mathrm{rng}(\bv^*)^2 \mathbf{1}$. So, $\normsigma \leq (1-\gamma)^{-1}\mathrm{rng}(\bv^*).$ Therefore, by Theorem \ref{thm:problem_dependent}, given an approximate upper bound of $\normsigma$ our algorithm is implementable with $\tilde{\bigO}({\Atot}(\epsilon^{-2} (1-\gamma)^{-2} \mathrm{rng}(\bv^\star)^2 + (1-\gamma)^{-2}))$ samples and time.
\paragraph{Highly mixing domains.} \cite{zanette2018problem} showed that a contextual bandit problem can be modeled as an MDP where the next state is sampled from a fixed stationary distribution. Using the fact that the transition function is independent of the prior state and action, the authors of \cite{pmlr-v97-zanette19a} show that $\mathrm{rng}(\bv^*) \leq 1$ with a simple proof in its Appendix A.2. Hence, by the argument in the preceding paragraph 
$\tilde{\bigO}\paren{{\Atot}\paren{\epsilon^{-2} (1-\gamma)^{-2} + (1-\gamma)^{-2}}}$ samples and time suffice in this setting.

\section{Conclusion}

We provided faster and more space-efficient algorithms for solving DMDPs. We showed how to apply truncation and recursive variance reduction to improve upon prior variance-reduced value iterations methods. Ultimately, these techniques reduced an additive $\otilde((1-\gamma)^{-3})$ term in the time and sample complexity of prior variance-reduced value iteration methods to $\tilde{O}((1-\gamma)^{-2})$. 

Natural open problems left by our work include exploring the practical implications of our techniques and exploring whether further runtime improvements are possible. For example, it may be of practical interest to explore whether there exist other analogs of truncation that do not need to limit the progress in individual steps of value iteration. Additionally, the question of whether the $\tilde{O}((1-\gamma)^{-2})$ term in our time and sample complexities can be further improved to $\tilde{O}((1-\gamma)^{-1})$ is a natural open problem; an affirmative answer to this question would yield the first near-optimal running times for solving a DMDP with a generative model for all $\epsilon$ and fully bridge the sample complexity gap between model-based and model-free methods. We hope this paper supports studying these questions and establishing the optimal complexity of solving MDPs.
\section*{Acknowledgements}

Thank you to Yuxin Chen for interesting and motivating discussion about model-based methods in RL. Yujia Jin and Ishani Karmarkar were funded in part by NSF CAREER Award CCF-1844855, NSF Grant CCF-1955039, and a PayPal research award. Aaron Sidford was funded in part by a Microsoft Research Faculty Fellowship, NSF CAREER Award CCF-1844855, NSF Grant CCF-1955039, and a PayPal research award. Yujia Jin's contributions to the project occurred while she was a graduate student at Stanford.

\printbibliography

@article{AMK13,
  author       = {Mohammad Gheshlaghi Azar and
                  R{\'{e}}mi Munos and
                  Hilbert J. Kappen},
  title        = {Minimax {PAC} bounds on the sample complexity of reinforcement learning
                  with a generative model},
  journal      = {Machine Learning},
  volume       = {91},
  year         = {2013}
}

@article{SWWY18,
  author       = {Aaron Sidford and
                  Mengdi Wang and
                  Xian Wu and
                  Yinyu Ye},
  title        = {Variance Reduced Value Iteration and Faster Algorithms for Solving
                  Markov Decision Processes},
  journal    = {\cSODA{2018}},
  year         = {2018}
}

@article{sidford2018journal,
  title={Variance reduced value iteration and faster algorithms for solving Markov decision processes},
  author={Sidford, Aaron and Wang, Mengdi and Wu, Xian and Ye, Yinyu},
  journal={Naval Research Logistics (NRL)},
  volume={70},
  year={2023},
  publisher={Wiley Online Library}
}

@article{BLLSS0W21,
  author       = {Alekh Agarwal and
                  Sham M. Kakade and
                  Lin F. Yang},
  title        = {Model-Based Reinforcement Learning with a Generative Model is Minimax
                  Optimal},
  journal    = {\cCOLT{2020}},
  year         = {2020}
}

@article{AKY20,
  author       = {Alekh Agarwal and
                  Sham M. Kakade and
                  Lin F. Yang},
  title        = {Model-Based Reinforcement Learning with a Generative Model is Minimax
                  Optimal},
  journal    = {\cCOLT{2020}},
  year         = {2020}
}

@article{LWCGC20,
  author       = {Gen Li and
                  Yuting Wei and
                  Yuejie Chi and
                  Yuantao Gu and
                  Yuxin Chen},
  title        = {Breaking the Sample Size Barrier in Model-Based Reinforcement Learning
                  with a Generative Model},
  journal    = {\cNIPS{2020}},
  year         = {2020}
}

@article{pmlr-v97-zanette19a,
  title = 	 {Tighter Problem-Dependent Regret Bounds in Reinforcement Learning without Domain Knowledge using Value Function Bounds},
  author =       {Zanette, Andrea and Brunskill, Emma},
  journal = 	 {\cICML{2019}},
  year = {2019}
}

@article{van2012reinforcement,
  title={Reinforcement learning and Markov decision processes},
  author={Van Otterlo, Martijn and Wiering, Marco},
  journal={Reinforcement learning: State-of-the-art},
  year={2012},
  publisher={Springer}
}

@book{hu2007markov,
  title={Markov decision processes with their applications},
  author={Hu, Qiying and Yue, Wuyi},
  volume={14},
  year={2007},
  publisher={Springer Science \& Business Media}
}

@article{van2009markov,
  title={Markov decision processes: Concepts and algorithms},
  author={Van Otterlo, Martijn},
  journal={Course on ‘Learning and Reasoning},
  year={2009},
  publisher={Citeseer}
}

@article{littman2013complexity,
  title={On the complexity of solving Markov decision problems},
  author={Littman, Michael L and Dean, Thomas L and Kaelbling, Leslie Pack},
  journal={\cUAI{1995}},
  year={1995}
}

@article{lee2014path,
  title={Path finding methods for linear programming: Solving linear programs in o (vrank) iterations and faster algorithms for maximum flow},
  author={Lee, Yin Tat and Sidford, Aaron},
  journal={\cFOCS{2014}},
  year={2014},
}

@article{ye2011simplex,
  title={The simplex and policy-iteration methods are strongly polynomial for the Markov decision problem with a fixed discount rate},
  author={Ye, Yinyu},
  journal={Mathematics of Operations Research},
  volume={36},
  year={2011},
}

@article{scherrer2013improved,
  title={Improved and generalized upper bounds on the complexity of policy iteration},
  author={Scherrer, Bruno},
  journal={\cNIPS{2013})},
  year={2013}
}

@article{wei2017reinforcement,
  title={Reinforcement learning to rank with Markov decision process},
  author={Wei, Zeng and Xu, Jun and Lan, Yanyan and Guo, Jiafeng and Cheng, Xueqi},
  journal={Proceedings of the 40th international ACM SIGIR conference on research and development in information retrieval},
  year={2017}
}

@article{degris2006learning,
  title={Learning the structure of factored markov decision processes in reinforcement learning problems},
  author={Degris, Thomas and Sigaud, Olivier and Wuillemin, Pierre-Henri},
  journal={\cICML{2006}},
  year={2006}
}

@book{sigaud2013markov,
  title={Markov decision processes in artificial intelligence},
  author={Sigaud, Olivier and Buffet, Olivier},
  year={2013},
  publisher={John Wiley \& Sons}
}

@article{kearns1998finite,
  title={Finite-sample convergence rates for Q-learning and indirect algorithms},
  author={Kearns, Michael and Singh, Satinder},
  journal={\cNIPS{1998}},
  volume={11},
  year={1998}
}

@book{kakade2003sample,
  title={On the sample complexity of reinforcement learning},
  author={Kakade, Sham Machandranath},
  year={2003},
  publisher={University of London, University College London (United Kingdom)}
}

@article{tseng1990solving,
  title={Solving H-horizon, stationary Markov decision problems in time proportional to log (H)},
  author={Tseng, Paul},
  journal={Operations Research Letters},
  volume={9},
  year={1990},
}

@article{wang2017randomized,
  title={Randomized linear programming solves the discounted markov decision problem in nearly-linear (sometimes sublinear) running time},
  author={Wang, Mengdi},
  journal={Mathematics of Operations Research},
  year={2019},
  volume={42}
}

@article{tropp2011freedman,
  title={Freedman'S Inequality for Matrix Martinglaes},
  author={Joel A. Tropp},
  journal={Electronic Communications in Probability},
  year={2011},
  volume={16},
}

@article{sidfordNearOpt,
 author = {Sidford, Aaron and Wang, Mengdi and Wu, Xian and Yang, Lin and Ye, Yinyu},
 journal = {\cNIPS{2018}},
 title = {Near-Optimal Time and Sample Complexities for Solving Markov Decision Processes with a Generative Model},
 year = {2018}
}

@inproceedings{brand2021,
author = {van den Brand, Jan and Lee, Yin Tat and Liu, Yang P. and Saranurak, Thatchaphol and Sidford, Aaron and Song, Zhao and Wang, Di},
title = {Minimum cost flows, MDPs, and l1-regression in nearly linear time for dense instances},
year = {2021},
booktitle = {\cSTOC{2021}}
}

@inproceedings{zanette2018problem,
  title={Problem dependent reinforcement learning bounds which can identify bandit structure in mdps},
  author={Zanette, Andrea and Brunskill, Emma},
  booktitle={International Conference on Machine Learning},
  pages={5747--5755},
  year={2018},
  organization={PMLR}
}

@article{johnson2013accelerating,
  title={Accelerating stochastic gradient descent using predictive variance reduction},
  author={Johnson, Rie and Zhang, Tong},
  journal={\cNIPS{2013}},
  year={2013}
}

@article{nguyen2017sarah,
  title={SARAH: A novel method for machine learning problems using stochastic recursive gradient},
  author={Nguyen, Lam M and Liu, Jie and Scheinberg, Katya and Tak{\'a}{\v{c}}, Martin},
  journal={\cICML{2017}},
  year={2017}
}

@InProceedings{jinefficiently,
  title = 	 {Efficiently Solving {MDP}s with Stochastic Mirror Descent},
  author =       {Jin, Yujia and Sidford, Aaron},
  booktitle = 	 {\cICML{2020}},
  year = 	 {2020},
}

@article{ye05,
author = {Ye, Yinyu},
title = {A New Complexity Result on Solving the Markov Decision Problem},
journal = {Mathematics of Operations Research},
volume = {30},
year = {2005},
}

@article{yu2018approximate,
  title={Approximate value iteration for risk-aware Markov decision processes},
  author={Yu, Pengqian and Haskell, William B and Xu, Huan},
  journal={IEEE Transactions on Automatic Control},
  volume={63},
  year={2018},
}

@inproceedings{hamadouche2021comparison,
  title={Comparison of value iteration, policy iteration and Q-Learning for solving decision-making problems},
  author={Hamadouche, Mohand and Dezan, Catherine and Espes, David and Branco, Kalinka},
  booktitle={2021 International Conference on Unmanned Aircraft Systems (ICUAS)},
  year={2021},
}

@article{zobel2005empirical,
  title={An empirical study of policy convergence in Markov decision process value iteration},
  author={Zobel, Christopher W and Scherer, William T},
  journal={Computers \& operations research},
  volume={32},
  year={2005}
}

@article{kalathil2021empirical,
  title={Empirical Q-value iteration},
  author={Kalathil, Dileep and Borkar, Vivek S and Jain, Rahul},
  journal={Stochastic Systems},
  volume={11},
  year={2021}
}

@article{feng2019does,
  title={How Does an Approximate Model Help in Reinforcement Learning?},
  author={Feng, Fei and Yin, Wotao and Yang, Lin F},
  journal={arXiv preprint arXiv:1912.02986},
  year={2019}
}

@article{cohen2020solving,
    author = {Michael B. Cohen and Yin Tat Lee and Zhao Song},
    title = {Solving Linear Programs in the Current Matrix Multiplication Time},
    journal = {Journal of the ACM},
    year = {2020},
}

@inproceedings{van2020deterministic,
  title={A deterministic linear program solver in current matrix multiplication time},
  author={van den Brand, Jan},
  booktitle={Proceedings of the Fourteenth Annual ACM-SIAM Symposium on Discrete Algorithms},
  pages={259--278},
  year={2020},
  organization={SIAM}
}

@inproceedings{jiang2021faster,
  title={A faster algorithm for solving general LPs},
  author={Jiang, Shunhua and Song, Zhao and Weinstein, Omri and Zhang, Hengjie},
  booktitle={Proceedings of the 53rd Annual ACM SIGACT Symposium on Theory of Computing},
  pages={823--832},
  year={2021}
}

@article{freedman1975tail,
  title={On tail probabilities for martingales},
  author={Freedman, David A},
  booktitle={the Annals of Probability},
  pages={100--118},
  year={1975},
  publisher={JSTOR}
}

@inproceedings{williams2024new,
  title={New bounds for matrix multiplication: from alpha to omega},
  author={Williams, Virginia Vassilevska and Xu, Yinzhan and Xu, Zixuan and Zhou, Renfei},
  booktitle={\cSODA{2024}},
  year={2024}
}

\end{document}